\documentclass{article}

\usepackage{microtype}
\usepackage{graphicx}
\usepackage{subfigure}
\usepackage{booktabs} %

\usepackage{hyperref}

\usepackage[accepted]{icml2025} 

\usepackage[abbreviations]{foreign}

\usepackage{amsmath,amssymb,amsfonts,amsthm,mathtools}
\usepackage{thm-restate}
\usepackage{bbm}
\usepackage{thm-restate}

\usepackage[capitalize,noabbrev]{cleveref}

\usepackage{tikz}
\usepackage{pgfplots}
\usepackage{pgfplotstable}

\theoremstyle{plain}
\newtheorem{theorem}{Theorem}[section]

\newtheorem{lemma}[theorem]{Lemma}
\newtheorem{corollary}[theorem]{Corollary}
\theoremstyle{definition}
\newtheorem{definition}[theorem]{Definition}

\theoremstyle{remark}

\usepackage{ifthen}
\newboolean{showcomments}
\setboolean{showcomments}{true}
\ifthenelse{\boolean{showcomments}}
{ \newcommand{\mynote}[3]{
		\fbox{\bfseries\sffamily\scriptsize#1}
		{\small$\blacktriangleright$\textsf{\emph{\color{#3}{#2}}}$\blacktriangleleft$}}
	\newcommand{\zzz}[1]{{\setlength{\fboxsep}{2pt}\fcolorbox{black}{yellow}{\textsf{\emph{#1}}}}\xspace}}
{ \newcommand{\mynote}[3]{}
	\newcommand{\zzz}[1]{}}

\definecolor{maroon}{rgb}{0.5, 0.0, 0.0}

\usepackage{acronym}

\acrodef{ML}{machine learning}
\acrodef{DP}{Demographic Parity}
\acrodef{TPR}{true positive rate}
\acrodef{FPR}{false positive rate}

\acrodef{AAA}{Algorithmic Accountability Act}
\acrodef{DMA}{Digital Markets Act}

\newcommand{\expectable}{expectable\xspace} %
\newcommand{\Nexpectable}{non-expectable\xspace}
\newcommand{\honest}[1][]{\ifthenelse{\equal{#1}{cap}}{Honest}{honest}\xspace} %
\newcommand{\dishonest}{malicious\xspace}

\newcommand{\hiddableUnfairness}{concealable unfairness\xspace}
\newcommand{\detect}{\text{Detect}}

\newcommand{\ft}{folktables\xspace}
\newcommand{\ACS}{ACSEmployment\xspace}
\newcommand{\celeba}{CelebA\xspace}
\newcommand{\lenet}{LeNet\xspace}

\acrodef{honest}{Honest}
\acrodef{ROC}{ROC Mitigation}
\acrodef{OT-L}{Optimal Label Transport}
\acrodef{LinR}{Linear Relaxation}
\acrodef{ThreshOpt}{Threshold Manipulation}
\acrodef{GBDT}{Gradient Boosted Decision Tree}
\acrodef{Log. Reg.}{Logistic Regression}

\newcommand{\RQ}[1]{\textbf{(RQ{#1})}}

\newcommand*\circled[1]{%
    \tikz[baseline=(char.base)]{%
        \node[shape=circle,draw,inner sep=2pt] (char) {#1};%
    }%
}

\graphicspath{ {figures/} }
\crefname{assumption}{assumption}{assumptions}

\pgfplotsset{compat=newest}
\usepgfplotslibrary{external,units,colorbrewer,groupplots,fillbetween,statistics}
\usetikzlibrary{patterns,shapes.misc}

\newcommand{\newgroupwidth}[2]%
{\expandafter\xdef\csname groupwidth#1\endcsname{#2}}

\DeclareMathOperator*{\argmin}{arg\,min}
\allowdisplaybreaks

\newcommand{\esp}[2][]{\mathbb{E}_{#1}\left[#2\right]}
\newcommand{\proba}[2][]{\mathbb{P}_{#1}\left(#2\right)}
\newcommand{\ind}[1]{\mathbbm{1}\left\{#1\right\}} %
\newcommand{\abs}[1]{\left| #1 \right|}
\newcommand{\set}[1]{\left\{ #1 \right\}}

\newcommand{\hypotClass}{\mathcal{H}} %
\newcommand{\fairModels}{\mathcal{F}} %
\newcommand{\prior}{\mathcal{H}_a} %
\newcommand{\dataPrior}{D_a} %
\newcommand{\groundtruth}{h_a} %
\newcommand{\radius}{\tau}
\newcommand{\distance}{\delta} %
\newcommand{\proj}{\text{proj}} %
\newcommand{\original}{h_p} %
\newcommand{\manipulated}{h_m} %
\newcommand{\manipulatedOptimal}{h_m^*} %
\newcommand{\inSpace}{\mathcal{X}} %
\newcommand{\dataDistrib}{\mathcal{D}} %
\newcommand{\inSamples}{X} %
\newcommand{\outSpace}{\mathcal{Y}} %
\newcommand{\binaryOutSpace}{\{ 0, 1\}}
\newcommand{\scoreOutSpace}{[0, 1]}
\newcommand{\outSamples}{Y} %
\newcommand{\protectedSpace}{\mathcal{A}} %
\newcommand{\protectedSamples}{A} %
\newcommand{\sampleSpace}{\mathcal{Z}} %
\newcommand{\samples}{Z} %
\newcommand{\dimension}{n} %
\newcommand{\fairness}{\mu} %
\newcommand{\probaAWins}{P_{uf}}

\newcommand{\cylinder}[2]{C( #1, #2 )}
\newcommand{\dashedVolume}[3]{V_{#3}^{\#}( #1, #2 )}
\newcommand{\ballVolume}[2]{V_{#2}^{ball}( #1)}
\newcommand{\capVolume}[3]{V_{#3}^{cap}( #1, #2 )}
\newcommand{\cylinderVolume}[3]{V_{#3}^{cylinder}( #1, #2 )}

\icmltitlerunning{Robust ML Auditing using Prior Knowledge}

\begin{document}

\twocolumn[
\icmltitle{Robust ML Auditing using Prior Knowledge}
\icmlsetsymbol{equal}{*}

\begin{icmlauthorlist}
\icmlauthor{Jade Garcia Bourrée}{equal,univ-rennes,inria,irisa}
\icmlauthor{Augustin Godinot}{equal,univ-rennes,inria,irisa,peren}
\icmlauthor{Sayan Biswas}{epfl}
\icmlauthor{Anne-Marie Kermarrec}{epfl}
\icmlauthor{Erwan Le Merrer}{inria}
\icmlauthor{Gilles Tredan}{laas}
\icmlauthor{Martijn de Vos}{epfl}
\icmlauthor{Milos Vujasinovic}{epfl}
\end{icmlauthorlist}

\icmlaffiliation{laas}{LAAS, CNRS, Toulouse, France}
\icmlaffiliation{univ-rennes}{Université de Rennes, Rennes, France}
\icmlaffiliation{inria}{Inria, Rennes, France}
\icmlaffiliation{irisa}{IRISA/CNRS, Rennes, France}
\icmlaffiliation{peren}{PEReN, Paris, France}
\icmlaffiliation{epfl}{EPFL, Lausanne, Switzerland}

\icmlcorrespondingauthor{Augustin Godinot}{augustin.godinot@inria.fr}
\icmlcorrespondingauthor{Jade Garcia Bourrée}{jade.garcia-bourree@inria.fr}

\icmlkeywords{ML audit, ML theory, fairness, fairwashing}

\vskip 0.3in
]

\printAffiliationsAndNotice{\icmlEqualContribution} %

\begin{abstract}

Among the many technical challenges to enforcing AI regulations, one crucial yet underexplored problem is the risk of audit manipulation.
This manipulation occurs when a platform deliberately alters its answers to a regulator to pass an audit without modifying its answers to other users.
In this paper, we introduce a novel approach to manipulation-proof auditing by taking into account the auditor's prior knowledge of the task solved by the platform. 
We first demonstrate that regulators must not rely on public priors (\eg a public dataset), as platforms could easily fool the auditor in such cases. 
We then formally establish the conditions under which an auditor can prevent audit manipulations using prior knowledge about the ground truth. 
Finally, our experiments with two standard datasets illustrate the maximum level of unfairness a platform can hide before being detected as malicious.
Our formalization and generalization of manipulation-proof auditing with a prior opens up new research directions for more robust fairness audits.

\end{abstract}

\section{Introduction}
\label{sec:introduction}

\Acf{ML} models are becoming central to numerous businesses, industrial processes, and administrations.
Such models are being employed in high-stakes domains where \ac{ML}-driven decisions can have profound impacts on individuals and communities~\cite{rudin2019stop}.

For instance, financial institutions have been leveraging \ac{ML}-driven systems to evaluate loan applications based on attributes like income, credit score, and employment history \cite{west2000neural}.
Given the far-reaching consequences of these applications, ensuring the fairness \cite{mehrabi2021survey} and regulatory compliance \cite{petersen2022responsible} of such models is paramount.

Independent \emph{fairness audits} serve as a critical tool for assessing the fairness of \ac{ML} models and ensure that model providers remain accountable to the public~\cite{birhaneAIAuditingBroken2024,rajiAnatomyAIAudits2024,rajiOutsiderOversightDesigning2022a}.
As models are placed in production, auditors rely on black-box interactions, where queries are sent to the model, and the responses are analyzed to identify potential fairness violations (\eg, see \cite{kim2019multiaccuracy}).
However, this reliance on black-box audits leaves the process vulnerable to \textit{manipulations} by the platform, also known as \emph{fairwashing}.
Regulatory practices currently require auditors to notify platforms in advance of an audit.
Platforms can thus strategically alter the model or its responses during the audit to create the appearance of fairness, effectively concealing underlying biases and unfair practices from the auditor while maintaining operational efficiency for its users.
Consider, for example, a social media platform that employs an \ac{ML} model to moderate content, automatically removing posts deemed harmful or misleading.
During a fairness audit, the platform could deploy a more lenient moderation model that appears unbiased, only to revert to a stricter, potentially biased version once the audit concludes, effectively concealing unfair treatment of certain user groups.

In fact, such discrepancies between a platform’s behavior during audits and its real-world operations have been observed.
Initially created as part of the Social Science One project, a data sharing program by Meta encountered a major setback when consistency issues were discovered in the data provided to scientists~\cite{timbergFacebookMadeBig2021}.
Similarly, a collaboration between Meta and independent scientists, studying the polarization effects of Facebook's recommendation algorithm, recently faced criticism over discrepancies found between the algorithm's behavior before and during the audit~\cite{ribeiroFacebookStandardAlgorithm2024}.
Academic studies show that fairness audits are easily manipulatable, whether the platform is required to prove its fairness through the release of a public dataset \cite{fukuchiFakingFairnessStealthily2020a}, through the explanation of decisions \cite{aivodjiCharacterizingRiskFairwashing2021,shamsabadiWashingUnwashableImpossibility2022,lemerrerRemoteExplainabilityFaces2020,aivodjiFairwashingRiskRationalization2019a}, or through black-box query interactions~\cite{yanActiveFairnessAuditing2022, bourreeRelevanceAPIsFacing2023,godinotManipulationsAreAI2024}.
The potential for manipulation underscores the need for more robust auditing strategies.

This work presents a novel theoretical framework and a practical implementation for preventing  manipulations by the platform.
Our analysis starts from a simple observation: auditors can readily collect labeled data, reflecting the platform's service from independent sources -- a common practice whose theoretical and empirical implications remain unexplored.
For example, in the moderation example discussed earlier, the auditor could have some undeniable evidence at hand, to confront the model under scrutiny, \eg ``A post with this content \emph{must} pass the moderation filter, otherwise there is some bias on a protected feature of the user profile''.
Thus, by incorporating this dataset, the auditor can independently verify the platform’s responses, cross-referencing them against known ground truth labels.
By combining black-box interactions with prior knowledge from the labeled dataset, our method enables more reliable detection of fairness violations while reducing the reliance on assumptions about the platform’s behavior.
Specifically, we aim to answer the following research question: \emph{Can the auditor's prior knowledge of the ground truth prevent fairwashing in fairness audits?}

Our paper makes the following three contributions:
\begin{itemize}
    \item We introduce and analyze a new fairness auditing approach for black-box interactions where the auditor has access to prior knowledge about the platform and the \ac{ML} task (\Cref{sec:general_prior}).
    \item We theoretically analyze how much unfairness a platform can conceal given the auditor's prior knowledge. For any auditor priors, our results highlight the importance of keeping the auditor's prior knowledge private (\Cref{sec:general_prior}). For the dataset prior we introduce, we establish bounds on the \hiddableUnfairness when the auditor prior remains confidential (\Cref{sec:dataset_prior}).
    \item By simulating fairness audits on multiple tabular and vision datasets, we provide a more nuanced understanding of how our framework should be implemented. Our experiments offer insights into setting the detection threshold used to identify manipulations  (\Cref{sec:experiments}).
\end{itemize}

\section{Background: Auditing \ac{ML} Models}
\label{sec:background_audit}

This work studies fairness audits of \ac{ML} decision-making systems under manipulation by the model-hosting platform.
We first formalize the decision-making system and then introduce the dynamics of fairness auditing.

\paragraph{\ac{ML} decision-making systems}
From feature transforms to specific business rules, modern \ac{ML} decision-making systems can be remarkably complex. 
We abstract all this complexity by modeling the entire system as a function $h: \inSpace \to \outSpace$ (\eg, $h$ can be a \ac{ML} model).
The set of possible queries $\inSpace$ is called the \emph{input space}, and the set of possible answers is called the \emph{output space}. 
We consider binary classification problems, which is in line with related work in the domain of \ac{ML} fairness analysis \cite{yanActiveFairnessAuditing2022,godinotManipulationsAreAI2024}.
Each query is associated with a protected attribute $a \in \protectedSpace$, which the platform is legally required not to discriminate against. 
Examples of such attributes include gender, age, or race and are typically defined by law.
The platform has access to the protected attribute $a$ either as a feature of the input space $\inSpace$ or by a proxy (\eg, looking at the name of the person to determine the gender).   
We define $\dataDistrib$ as the data distribution on $\inSpace \times \protectedSpace$. 
For any subset $S \subset \inSpace \times \protectedSpace$ and protected feature value $a \in \protectedSamples$, we will write $S_a = \set{x \;\middle|\; (x, a') \in S, a' = a}$ and $h(S) = \set{h(x) \;\middle|\; (x, a) \in S}$.
Throughout the paper, when it is clear from the context we will abuse the $S$ notation: $S$ will either be a subset of $\inSpace$, $\inSpace \times \protectedSpace$ or $\inSpace \times \protectedSpace \times \outSpace$.

This work analyses how the platform can manipulate its model to pass a fairness audit, we now define relevant notation for this.
The space of models that the platform can implement is called the \emph{hypothesis space} $\hypotClass$. 
The loss function $L : \hypotClass \times (\inSpace\times\outSpace)\rightarrow \mathbb{R}$ measures the discrepancy between predictions and ground truth values.
For a given hypothesis $h \in \hypotClass$, its expected loss over the distribution $\dataDistrib$ is $L(h, \mathcal{D})=\esp[(x,a)\sim\dataDistrib]{\ell(h(x),x,a)}$ with $\ell$ is a loss function that quantifies the error of $h(x)$ given a single input $x$ and its protected attribute $a$.

\paragraph{ML auditing}
An \ac{ML} audit is "any independent \textit{assessment} of an identified \textit{audit target} via an \textit{evaluation} of articulated expectations with the implicit or explicit objective of \textit{accountability}"~\cite{birhaneAIAuditingBroken2024}.
A \ac{ML} audit involves three entities.
The \textit{platform} is the entity hosting the \ac{ML} decision-making system.
The \textit{users} are those using the service hosted by the platform.
The \textit{auditor} is the entity conducting the audit to verify whether the \ac{ML} model is compliant for \textit{all} users.
The auditor could be a state regulator, a consulting firm, or even a group of users.

\paragraph{Fairness metric}
In this work, we consider \ac{ML} audits targeting the \emph{fairness} of the studied system.
Specifically, the auditor chooses a fairness metric and sends queries to the platform to determine whether the platform abides by their fairness criterion.
Among all the {(un-)fair}ness metrics, we study \emph{\ac{DP}}~\cite{caldersBuildingClassifiersIndependency2009}, which is commonly used in the fairness evaluation literature thanks to its simplicity.
\ac{DP} is defined as follows:

\begin{equation}
    \fairness(h) =
    \begin{aligned}
        &\proba[(X, A) \sim \dataDistrib]{h(\inSamples) = 1 | \protectedSamples = 1} \\
        &\quad - \proba[(X, A) \sim \dataDistrib]{h(\inSamples) = 1 | \protectedSamples = 0}
    \end{aligned}
\end{equation}

For a platform, \ac{DP} is the easiest metric to manipulate \cite{yanActiveFairnessAuditing2022,ajarraActiveFourierAuditor2024} as it only depends on the \emph{outcome} of the \ac{ML} model and not on its performance on the different protected groups.
Thus, a platform can artificially adjust outputs, \eg, providing more positive outcomes for an underrepresented group.
To decide whether a platform passes the audit or not, the auditor builds an \emph{audit set} $S \subset \inSpace \times \protectedSpace$ and evaluates the plug-in \ac{DP} estimator:
$\hat{\fairness}(h, S) = \frac{1}{\abs{S_1}}\sum_{x \in S_1}\ind{h(x) = 1} - \frac{1}{\abs{S_0}}\sum_{x \in S_0}\ind{h(x) = 1}$.
Based on $\fairness(h)$, we also define the set of fair models $\fairModels = \set{h \in \outSpace^\inSpace \;:\; \mu(h) = 0}$.

\begin{figure}
    \centering
    \includegraphics{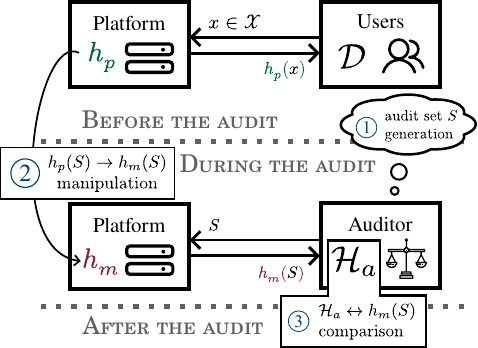}
    \caption{The auditing process as conducted by an auditor, which proceeds in three steps. The platform exposes a model $\original$ to the users. To appear fair to the auditor while not deteriorating the utility for its users, the platform manipulates its answers on the audit set $S$.}
    \label{fig:audit}
\end{figure}

\section{Enhancing Black-box Auditing with a Prior}
\label{sec:general_prior}

Since a malicious platform can manipulate the \ac{DP} metric with relative ease, the auditor has to find ways to prevent these manipulations (\eg, using a different metric) or to detect them.
In this section, we explore the latter.
To detect manipulations, the auditor must use \emph{prior} knowledge about what constitutes a "likely set of answers" on its audit dataset $S$.
Then, using this prior, they would be able to estimate the likelihood that the received set of answers $h_m(S)$ has been manipulated.

\subsection{Modeling the Auditor Prior}\label{subsec:auditor_prior_model}
Previous work has demonstrated that prior knowledge is both a practical and an essential tool for auditing, yet the notion of an auditor prior has not been explicitly leveraged in the analysis of fairness audits. 
We define an \emph{auditor prior} as follows.

\begin{definition}[Auditor prior]\label{def:auditor_prior}
    The \emph{auditor prior} is a set of models $\prior \subset \outSpace^\inSpace$ that the auditor can reasonably expect to observe given her knowledge of the decision task by the platform.
\end{definition}

For example, in \cite{tanDistillandCompareAuditingBlackBox2018}, the authors study feature importance by training two models --- one on a public dataset and another via distillation of the audited \ac{ML} model --- and comparing the resulting models.
Using a more theoretical approach, \citeauthor{yanActiveFairnessAuditing2022} and \citeauthor{godinotManipulationsAreAI2024} explored the case of an auditor knowing the hypothesis class of the platform, \ie, $\prior = \hypotClass$.
\citeauthor{ajarraActiveFourierAuditor2024} proposed to use an assumption about the Boolean Fourier coefficients of $\hypotClass$ to construct $\prior$. 
Finally, \citeauthor{bourreeRelevanceAPIsFacing2023} and \citeauthor{shamsabadiWashingUnwashableImpossibility2022} used side-channel access (\eg, an additional API or explanations) to the \ac{ML} model to define $\prior$ and derive guarantees on the measured fairness.
In \Cref{sec:dataset_prior}, we introduce a labeled dataset $D_a$ that the auditor will leverage to define $\prior$.
\Cref{def:auditor_prior} captures %
all of the situations above and allows to formulate general results about the problem of robust auditing.

\paragraph{The auditing process}
The auditing process consists of three steps which we visualize in~\Cref{fig:audit}.
Here, $\original$ refers to the model that the platform exposes to its users (the top part of~\Cref{fig:audit}) and $h_m$ refers to the model exposed to the auditor (bottom part of~\Cref{fig:audit}).
First, the auditor builds an audit set $S \subset \inSpace$ and sends the queries in $S$ to the platform (step \circled{\scriptsize 1}). 
The platform receives $S$ \emph{all at once} and computes the answers using its model $h_p$.
To appear fair if it is not, the platform projects its labels $h_p(S)$ on the set $\fairModels$ of fair models.
This defines a manipulated model $h_m$ and the answers $h_m(S)$ the platform will send to the auditor (step \circled{\scriptsize 2}).
The auditor receives $h_m(S)$ and exploits these samples to evaluate whether the platform is fair ($\manipulated \in \fairModels$) and honest ($\original = \manipulated$), (step \circled{\scriptsize 3}).
Since the auditor does not have direct access to $\original$, they compare $\manipulated$ to their prior $\prior$ to decide whether the platform is honest or malicious.
Thus, the auditor tests the two following properties of $\manipulated$:%
\begin{align}
    \text{\textbf{Is the platform fair}}?   &\quad \manipulated \stackrel{?}{\in} \fairModels \\
    \text{\textbf{Is the platform honest}}? &\quad \manipulated \stackrel{?}{\in} \prior 
\end{align}

For dataset priors (\ie when $\prior$ is a ball, see \Cref{sec:dataset_prior}), we draw $\fairModels$ and $\prior$ in \Cref{fig:manipulation_and_prior}.
Given a model $\manipulated$, the fairness audit is equivalent to checking if $\manipulated$ belongs to the blue shaded area.
In the example of \Cref{fig:manipulation_and_prior}, the platform would be flagged as malicious as $\manipulated$ belongs to $\fairModels$ but not to $\prior$.
    
\paragraph{Online v.s. batch auditing}
Note that we assume that the platform receives all audit queries at once and that it is possible to detect all the audit queries.
In practice, the queries are usually issued online (that is, one-by-one) by the auditor, through web-scraping or through an API. 
Compared to online auditing, it is easier for the platform to manipulate an audit if it knows all the audit queries before having to answer.
On the other hand, because the auditor has to send all their queries at once, they cannot use the answers of the platform to actively guide the generation of the audit questions (\eg, as in \cite{yanActiveFairnessAuditing2022,godinotManipulationsAreAI2024}).
Ultimately, our setting is built as a worst-case analysis of the auditing game for the auditor. 

\paragraph{Auditing axioms}
To avoid trivial audits, we add two modeling assumptions.
The first assumption ensures that the auditors' prior is correct so that a honest platform does not appear as lying. 
The second assumption asserts that an audit is necessary, otherwise the auditor could directly conclude from his prior that the platform is unfair.
That is to say, the auditor should never flag a \honest platform malicious.
In particular, the auditor must have a prior that is close to the ground truth.
Those assumptions are expressed as:
\begin{align}\label{eq:auditing_axioms}
    h_p \in \prior && \text{and} && \prior \cap \fairModels \neq \emptyset.
\end{align}

\subsection{On Public Auditor Priors}
\label{sec:analysis}

A typical auditor proceeds in the following way. Upon examining a platform's model $\manipulated$, the auditor must first understand the task addressed by $\manipulated$ and what constitutes a "good-performing model" on this task.
In our moderation example, the auditor might try to look for public moderation datasets to test the performance of $\manipulated$ using a few examples. 
It might also look for publicly-available moderation models to compare their resulting input/output pairs with those of $\manipulated$.
Unfortunately, our first remark is that regardless of the prior the auditor might construct, if these models are public (or at least known by the platform), the platform will always be able to manipulate the audit:

\begin{restatable}[]{theorem}{public_prior_impossibility}
    \label{thm:public_prior_impossibility}
    Assume the platform knows $\prior$, it can then always pick $\manipulated\in\{\prior\cap \fairModels\}$ to appear both fair and \honest.
\end{restatable}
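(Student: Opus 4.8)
The plan is to observe that this theorem is essentially immediate once the auditing axioms in~\eqref{eq:auditing_axioms} are in place, and the proof reduces to checking that the set the platform picks from is non-empty. First I would invoke the second auditing axiom, $\prior \cap \fairModels \neq \emptyset$, which guarantees that the set $\{\prior \cap \fairModels\}$ from which the platform is supposed to choose $\manipulated$ is actually non-empty. Hence there exists at least one model $\manipulated \in \prior \cap \fairModels$ available to the platform.

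Next I would spell out why such a choice makes the platform pass both tests the auditor performs. Recall from~\Cref{subsec:auditor_prior_model} that the auditor flags the platform as unfair iff $\manipulated \notin \fairModels$ and as malicious iff $\manipulated \notin \prior$. If the platform sets its answers on the audit set $S$ to agree with some $\manipulated \in \prior \cap \fairModels$, then by construction $\manipulated \in \fairModels$, so $\hat{\fairness}(\manipulated, S) = 0$ (more precisely $\fairness(\manipulated) = 0$, and the auditor's plug-in estimator cannot distinguish $\manipulated$ restricted to $S$ from a genuinely fair model), and $\manipulated \in \prior$, so the auditor's honesty test also passes. The key point to state carefully is that the auditor only ever sees $\manipulated(S)$, never $\original$ directly, so the auditor has no way to detect that the true served model $\original$ differs from the fair, in-prior model $\manipulated$ — the manipulation is undetectable by construction.

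Finally I would note the only real content of the statement: the hypothesis ``the platform knows $\prior$'' is exactly what makes this attack \emph{constructive} — the platform can explicitly compute a member of $\prior \cap \fairModels$ and copy its labels on $S$. Without knowledge of $\prior$, the platform would have to guess, which is precisely the gap exploited in~\Cref{sec:dataset_prior}. So the proof is three short lines: (i) $\prior \cap \fairModels \neq \emptyset$ by the auditing axioms; (ii) pick any $\manipulated$ in it and answer accordingly on $S$; (iii) both audit tests pass since $\manipulated \in \fairModels$ and $\manipulated \in \prior$, and the auditor cannot see $\original$.

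I do not anticipate a genuine obstacle here — the theorem is a ``sanity check'' / motivating impossibility result whose entire force comes from the modeling assumptions already granted. The only subtlety worth a sentence is being explicit that the audit is conducted purely through the answers $\manipulated(S)$ on the finite audit set, so ``$\manipulated \in \fairModels$'' and ``$\manipulated \in \prior$'' (as the auditor perceives them from $S$) are the exact pass conditions, and a malicious platform incurs no penalty for having $\original \neq \manipulated$ off the audit set.
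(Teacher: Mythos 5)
Your proposal is correct and follows essentially the same argument as the paper: invoke the auditing axiom $\prior \cap \fairModels \neq \emptyset$ from \Cref{eq:auditing_axioms}, note the platform can compute the intersection since it knows both sets, and pick any $\manipulated$ in it, which passes both the fairness and honesty tests by membership. The extra remarks on constructiveness and the auditor only observing $\manipulated(S)$ are fine elaborations but not needed beyond the paper's short proof.
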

\begin{proof} 
First, recall that by definition the platform knows $\fairModels$. Assume that the dataset prior is public, the platform also knows $\prior$. Hence the platform can compute $\fairModels\cap \prior$.  As by assumption, $\fairModels\cap \prior \neq \emptyset$ (\Cref{eq:auditing_axioms}), the platform can pick any model $\manipulated \in \fairModels\cap \prior$.
\end{proof}

In the case of \cite{shamsabadiWashingUnwashableImpossibility2022}, the platform perfectly knows $\prior$ (because the $\prior$ is coming from queries of its model) so the detector is subject to this manipulation (called \textit{Irreducibility} in the paper). 
In \citeauthor{yanActiveFairnessAuditing2022}'s work, $\prior$ is the hypothesis class $\hypotClass$ of the platform, communicated to the auditor before the audit. 
\Cref{thm:public_prior_impossibility} provides a novel view on the impossibility results that were later proved in \cite{godinotManipulationsAreAI2024}.

\section{Using Labeled Datasets for More Robust Audits Against Manipulations}
\label{sec:dataset_prior}

In an ideal, yet unrealistic audit scenario, the auditor would have access to non-manipulated answers from the original platform model $\original$.
The prior $\prior$ would then be the set of models that agree with these non-manipulated answers and would allow the auditor to detect inconsistencies between the original $\original$ and manipulated $\manipulated$ models.
Yet in general, the auditor does not have access to such non-manipulated answers.

As an alternative, we propose to study the use of a private (because of \Cref{thm:public_prior_impossibility}) dataset $\dataPrior$, collected by the auditor to construct the auditor prior $\prior$.
This idea (coupled with an assumption on the hypothesis class) has been studied experimentally~\cite{tanDistillandCompareAuditingBlackBox2018} but the more recent theoretical works on robust auditing diverged towards studying priors on the model itself rather than on the data~\cite{shamsabadiWashingUnwashableImpossibility2022,yanActiveFairnessAuditing2022,ajarraActiveFourierAuditor2024}.
In the following, we define what a dataset prior is, and study the guarantees an auditor can achieve using this prior.
Unless noted otherwise, in this section and in \Cref{sec:experiments}, $\prior$ will denote the dataset prior.

\begin{definition}[Dataset prior]\label{def:dataset_prior} 
    Let $\dataPrior = (X_a, A_a, Y_a) \in \inSpace^n \times \protectedSpace^n \times \outSpace^n$ be a labeled dataset the auditor has access to.
    The dataset prior $\prior$ is defined as the set of models that have a reasonable risk on $\dataPrior$. 
    \begin{equation}\label{eq:data_prior}
        \prior = \set{h \in \outSpace^\inSpace: L(h, \dataPrior) < \radius}.
    \end{equation}
\end{definition}
To test if the platform is honest, the auditor needs to verify whether $h_m \in \prior$, \ie whether $L(h_m, \dataPrior) < \radius$.
The risk threshold $\radius$ thus plays a big role in the guarantees the auditor will be able to achieve.
We discuss the impact of $\radius$ in \cref{subsec:achievable_guarantees} and guidelines to set its value in \cref{subsec:practical_considerations}, but first, we need to discuss the definition of optimal manipulation in \cref{subsec:optimal_manipulation}.

\subsection{Optimal Manipulation}
\label{subsec:optimal_manipulation}
Given the audit set $S$ and its model $h_p$, the objective of a manipulative platform is to create a set of answers $h_m(S)$ that appear fair to the auditor but also do not raise suspicions.
Ideally, the platform would like to know the auditor prior $\prior$ (see \Cref{thm:public_prior_impossibility}), but in the general case it cannot because it is not public information. %
As a consequence, the platform cannot directly optimize its answers to be \expectable \emph{and} fair.
However, it still has cards up its sleeve; it already trained a model $h_p$ on a dataset $D$ that is close to that of the auditor $\dataPrior$.

Thus, instead of searching $h_m$ in $\prior \cap \fairModels$, the platform can assume that its true model $h_p$ is \expectable ~-- that is, $h_p \in \prior$ -- and try to find a fair model $h_m \in \fairModels$ while flipping as few labels as possible from $h_p$.
Therefore, the optimal manipulation is the projection of $h_p$ on $\fairModels$:
\begin{equation}\label{eq:optimal_manipulation}
    \manipulatedOptimal = \proj_\fairModels(\original) = \argmin_{h \in \fairModels} d(h, \original).
\end{equation}
The distance $d$ in \Cref{eq:optimal_manipulation} is the value of risk $L$ of $h$ using the labels of $\original$ as the ground truth.   
This scenario captures the fairwashing approach in~\cite{aivodjiCharacterizingRiskFairwashing2021} in the context of explanation manipulations.

\subsection{Achievable Guarantees}
\label{subsec:achievable_guarantees}
Following the second auditing axiom formulated in \cref{eq:auditing_axioms}, the original model of the platform is always expectable, \ie, $\original \in \prior$. 
Thus, the manipulation detection test has no false positives, and the main quantity of interest to the auditor is the \emph{manipulation detection rate}.
\begin{definition}[Detection rate]
    The probability $\probaAWins$ that the auditor correctly detects a manipulative platform with optimal manipulation is $\probaAWins = P(\manipulatedOptimal \notin \prior | \original \in \prior)$.
\end{definition}

Estimating or computing $\probaAWins$ requires the knowledge of the distribution of models in $\prior$. 
Unfortunately, unless they have access to the training pipeline of the platform, this model distribution is inaccessible to the auditor.
To overcome this issue, we make the assumption of an \emph{uninformative prior}: since the auditor does not know the model distribution in $\prior$, they must assume it is uniform. 

\begin{figure}
    \centering
        \includegraphics{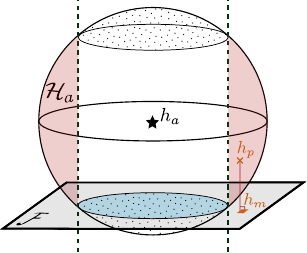}d
    \caption{Representation of the auditor prior $\prior$, the honest platform model $h_p$ and a corresponding malicious model $h_m$ on the fair $\fairModels$ plane. The red area represents the area where platforms optimal manipulations are detected as dishonest: they fall outside of the blue region of $\fairModels$
    }
    \label{fig:manipulation_and_prior}
\end{figure}

\begin{restatable}[Prior-Uniform detection rate]{theorem}{probaBall}
\label{th:proba_ball}
    Under the dataset prior of \cref{def:dataset_prior} with $L$ defined as the $\ell_2$ norm, and the \emph{uninformative prior} assumption, the probability that the auditor correctly detects a \dishonest platform trying to be fair is 
    \begin{equation*}
        1 - \frac{1}{W_\dimension}\left( \int_0^{\arccos(\distance/\radius)}\sin^\dimension(\theta)d\theta - \frac{\distance}{\radius} \left ( 1 - \frac{\distance^2}{\radius^2}\right ) ^{(n-1)/2}\right).
    \end{equation*}
    with $\distance = d(\groundtruth, \fairModels)$, the distance of $\groundtruth$ to $\fairModels$ and $W_\dimension$ is the $n$-term of Wallis' integrals.
\end{restatable}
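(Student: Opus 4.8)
The plan is to reduce the statement to a Euclidean volume computation in $\mathbb{R}^\dimension$. First I would identify each model with its vector of predictions on the $\dimension$ points of $\dataPrior$ (embedding prediction vectors in $\mathbb{R}^\dimension$ and assuming $\dimension\geq 2$). Since $L$ is the $\ell_2$ norm, $\prior$ then becomes the open ball $\set{v : \NRMTWO{v - \outSamples_a} < \radius}$ centered at the ground-truth label vector $\outSamples_a = \groundtruth(\inSamples_a)$; the second axiom of \cref{eq:auditing_axioms} forces this ball to meet $\fairModels$, hence $\distance = d(\groundtruth, \fairModels) < \radius$ and $\arccos(\distance/\radius)$ is well-defined. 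The empirical \ac{DP} constraint on these $\dimension$ points is a linear functional of the prediction vector that vanishes at the zero vector, so restricted to $\dataPrior$ the fair set $\fairModels$ is a hyperplane through the origin; consequently the $\ell_2$-projection $\proj_\fairModels$ of \cref{eq:optimal_manipulation} is the orthogonal projection onto that hyperplane, \ie $\manipulatedOptimal$. Finally, the uninformative-prior assumption means $\original$ is uniform on the ball $\prior$.

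Next I would rewrite detection as a scalar event. Let $u$ be a unit normal of $\fairModels$ and write $\original - \outSamples_a = t\, u + y$ with $y \perp u$, so $y$ lives in an $(\dimension - 1)$-dimensional subspace. Since the hyperplane sits at distance $\distance$ from $\outSamples_a$ and orthogonal projection changes only the $u$-coordinate, the displacement of $\manipulatedOptimal$ from $\outSamples_a$ has $u$-part of length $\distance$ and orthogonal part $y$, so $\NRMTWO{\manipulatedOptimal - \outSamples_a}^2 = \distance^2 + \NRMTWO{y}^2$ by Pythagoras. As $\prior$ is an open ball (its boundary has measure zero under the uniform law), the platform is detected exactly when $\NRMTWO{y} \geq \sqrt{\radius^2 - \distance^2}$, hence
\begin{equation*}
    \probaAWins = 1 - \proba{\NRMTWO{y} < \sqrt{\radius^2 - \distance^2}},
\end{equation*}
with $\original - \outSamples_a$ uniform on the radius-$\radius$ ball of $\mathbb{R}^\dimension$.

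The heart of the proof is then a volume ratio. Put $r = \sqrt{\radius^2 - \distance^2}$. Inside the ball, $\set{\NRMTWO{y} < r}$ is the solid cylinder $\set{(t, y) : t^2 + \NRMTWO{y}^2 < \radius^2,\ \NRMTWO{y} < r}$; integrating first over $t$ (an interval of length $2\sqrt{\radius^2 - \NRMTWO{y}^2}$) and then in polar coordinates over the $y$-subspace, its volume is $2\,\abs{S^{\dimension-2}}\int_0^{r}\sqrt{\radius^2 - s^2}\, s^{\dimension-2}\, ds$. The substitution $s = \radius\sin\theta$ turns the upper limit into $\arccos(\distance/\radius)$ and the integrand into $\radius^\dimension\cos^2\theta\sin^{\dimension-2}\theta$; dividing by $\ballVolume{\radius}{\dimension}$ and simplifying the prefactor with $\abs{S^{\dimension-2}} = 2\pi^{(\dimension-1)/2}/\Gamma(\tfrac{\dimension-1}{2})$, $\ballVolume{\radius}{\dimension} = \pi^{\dimension/2}\radius^\dimension/\Gamma(\tfrac{\dimension}{2}+1)$, the closed form $W_\dimension = \tfrac{\sqrt\pi}{2}\Gamma(\tfrac{\dimension+1}{2})/\Gamma(\tfrac{\dimension}{2}+1)$ of Wallis' integral, and $\Gamma(\tfrac{\dimension+1}{2}) = \tfrac{\dimension-1}{2}\Gamma(\tfrac{\dimension-1}{2})$, collapses it to $\tfrac{1}{(\dimension-1)W_\dimension}\int_0^{\arccos(\distance/\radius)}\cos^2\theta\sin^{\dimension-2}\theta\, d\theta$. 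Then $\cos^2\theta\sin^{\dimension-2}\theta = \sin^{\dimension-2}\theta - \sin^\dimension\theta$ together with the reduction formula for $\int\sin^m$ (equivalently, one integration by parts) expresses this integral in terms of $\int_0^{\arccos(\distance/\radius)}\sin^\dimension\theta\, d\theta$ and the boundary term $\tfrac{\distance}{\radius}(1 - \distance^2/\radius^2)^{(\dimension-1)/2}$, and rearranging gives the claimed formula for $\probaAWins$.

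I expect the constant-chasing in the third step to be the main obstacle: one must carry $\abs{S^{\dimension-2}}$, $\ballVolume{\radius}{\dimension}$, the Wallis integral $W_\dimension$, and the boundary term through the $\sin$-power reduction simultaneously and verify they combine into exactly $1/W_\dimension$ times the stated bracket (the sign of the boundary term being the easiest thing to get wrong). A second, more conceptual point to pin down is the first step: that restricting everything to the $\dimension$ labelled points of $\dataPrior$ loses nothing, that there $\fairModels$ is a genuine hyperplane through the origin, and that under $\ell_2$ the optimal manipulation is literally the orthogonal projection -- all consequences of the linearity of \ac{DP} in the model outputs, but worth stating with care (including the relaxation of $\binaryOutSpace$ to $\scoreOutSpace$ and the regime $\dimension \ge 2$).
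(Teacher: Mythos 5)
Your geometric reduction is sound and is essentially the paper's: detection happens exactly when the original model $\original$ falls outside the right cylinder erected over $\prior \cap \fairModels$, and your Pythagorean criterion $\NRMTWO{y} \ge \sqrt{\radius^2 - \distance^2}$ is precisely that cylinder condition in coordinates. The only real difference is computational: the paper splits the ball--cylinder intersection into a finite cylinder of height $2\distance$ plus two spherical caps and cites a closed-form cap volume, whereas you evaluate the same volume with one Fubini-then-polar integral followed by an integration by parts. Either route works in principle.

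However, the constant- and sign-chase -- exactly the step you flagged as delicate -- does not go through as you assert, on two counts. First, the prefactor: with $\abs{S^{\dimension-2}} = 2\pi^{(\dimension-1)/2}/\Gamma(\tfrac{\dimension-1}{2})$ the ratio collapses to $\tfrac{\dimension-1}{W_\dimension}\int_0^{\arccos(\distance/\radius)}\cos^2\theta\sin^{\dimension-2}\theta\,d\theta$, not $\tfrac{1}{(\dimension-1)W_\dimension}$ times that integral (the two coincide only at $\dimension=2$). Second, and more importantly, integration by parts gives $(\dimension-1)\int_0^{\theta_0}\cos^2\theta\sin^{\dimension-2}\theta\,d\theta = \int_0^{\theta_0}\sin^\dimension\theta\,d\theta + \cos\theta_0\sin^{\dimension-1}\theta_0$ with $\theta_0 = \arccos(\distance/\radius)$, so your computation, carried out correctly, terminates at
\begin{equation*}
    \probaAWins \;=\; 1 - \frac{1}{W_\dimension}\left( \int_0^{\arccos(\distance/\radius)}\sin^\dimension(\theta)\,d\theta \;+\; \frac{\distance}{\radius}\left(1 - \frac{\distance^2}{\radius^2}\right)^{(\dimension-1)/2}\right),
\end{equation*}
with a plus on the boundary term, not the minus appearing in the statement; your claim that "rearranging gives the claimed formula" is where the argument breaks. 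This is not a defect of your geometry but of the printed formula: the non-detection region is the disjoint union of the slab-cylinder and the two caps, so both contributions must be subtracted from $1$ with the same sign, and indeed the paper's appendix correctly reaches $1 - \tfrac{1}{W_\dimension}\int\sin^\dimension - \tfrac{1}{W_\dimension}\tfrac{\distance}{\radius}(1-\tfrac{\distance^2}{\radius^2})^{(\dimension-1)/2}$ before flipping the sign of the second term in its final factorization. A sanity check at $\dimension = 2$, $\distance/\radius = 1/\sqrt{2}$: the minus version exceeds $1$, while the plus version ($\approx 0.18$) matches the elementary circle/chord computation. So as written your proposal claims to land on a formula it cannot land on; executed carefully it proves the corrected (plus-sign) version of the theorem.
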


To gain intuition about the proof, we represent the audit case for $\abs{S} = 3$ in \Cref{fig:manipulation_and_prior}.
By definition of the dataset prior, $\prior$ is a ball of radius $\radius$, centered on $Y_a$, the labels given in the audit dataset $\dataPrior$.
The manipulation of a model $\original$ can be detected only if the resulting model is outside of $\prior$, as shown in orange on \cref{fig:manipulation_and_prior}.
The probability of detection is thus $1$ minus the volume of original models $\original$ whose projection on $\fairModels$ lies outside on $\prior$.
This volume is highlighted in red in \cref{fig:manipulation_and_prior}.
The detailed proof of \Cref{th:proba_ball} is deferred to Appendix~\ref{a:theory}.

\Cref{th:proba_ball} highlights two key parameters to the auditor's success: the unfairness of the prior $\delta = d(\groundtruth, \fairModels)$ and the expectability threshold $\radius$. 
If the dataset prior is perfectly fair (\ie, $\delta = 0$), then the auditor has no chance to detect a manipulated model as \Nexpectable ($\probaAWins = 0$, \cref{cor:fairBound}).
On the other hand, \cref{cor:bounds} proves that, if $\radius = \distance$
\footnote{Per our first axiom in \cref{eq:auditing_axioms}, we have that $\delta \leq \radius$.}
then $\probaAWins = 1$.
Finally, in \cref{cor:probAwins_bound}, we derive a lower bound on $\probaAWins$ for the case $0 < \delta < \radius$.
We provide the proof of \cref{cor:probAwins_bound} in \Cref{a:theory}.

\begin{restatable}[Detection rate lower bound]{corollary}{probAwinsBound}
\label{cor:probAwins_bound}
    If $n$ is even, 
    \begin{equation*}
        \frac{1}{W_\dimension} \frac{\distance}{\radius} 
            \left ( 1 - \frac{\distance^2}{\radius^2}\right ) ^{(n-1)/2} 
        \leq \probaAWins \leq 1.
    \end{equation*}
\end{restatable}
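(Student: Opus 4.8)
The plan is to read both inequalities off the closed form for $\probaAWins$ established in \Cref{th:proba_ball}; no new probabilistic argument is needed, so I do not anticipate a genuine obstacle — the work is one rearrangement plus a nonnegativity remark. First set $\phi := \arccos(\distance/\radius)$. This is well defined with $0 \le \phi \le \pi/2$: indeed $\distance/\radius \ge 0$ since $\distance = d(\groundtruth,\fairModels)$ is a distance, and $\distance/\radius \le 1$ by the first auditing axiom of \cref{eq:auditing_axioms} (recorded in the footnote to \Cref{th:proba_ball}); that same axiom also makes $\bigl(1 - \distance^2/\radius^2\bigr)^{(\dimension-1)/2}$ a well-defined nonnegative real. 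Abbreviating $R := \frac{\distance}{\radius}\bigl(1 - \frac{\distance^2}{\radius^2}\bigr)^{(\dimension-1)/2} \ge 0$, \Cref{th:proba_ball} reads
\[
    \probaAWins \;=\; 1 - \frac{1}{W_\dimension}\left( \int_0^{\phi}\sin^{\dimension}(\theta)\,d\theta \;-\; R \right).
\]

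For the lower bound I would bound the partial integral by the full Wallis integral: since $\phi \le \pi/2$ and $\sin^{\dimension}(\theta) \ge 0$ on $[0,\pi/2]$, monotonicity of integration gives $\int_0^{\phi}\sin^{\dimension}(\theta)\,d\theta \le \int_0^{\pi/2}\sin^{\dimension}(\theta)\,d\theta = W_\dimension$, the right-hand side being exactly the $\dimension$-th term of Wallis' integrals. Hence $\int_0^{\phi}\sin^{\dimension}(\theta)\,d\theta - R \le W_\dimension - R$, and substituting into the identity above,
\[
    \probaAWins \;\ge\; 1 - \frac{1}{W_\dimension}\bigl(W_\dimension - R\bigr) \;=\; \frac{R}{W_\dimension} \;=\; \frac{1}{W_\dimension}\,\frac{\distance}{\radius}\Bigl(1 - \frac{\distance^2}{\radius^2}\Bigr)^{(\dimension-1)/2},
\]
which is the claimed lower bound; since $R \ge 0$ it is moreover nonnegative, as any such bound on a probability must be.

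The upper bound $\probaAWins \le 1$ needs nothing further: by definition $\probaAWins = P(\manipulatedOptimal \notin \prior \mid \original \in \prior)$ is a probability. The only points requiring care in the whole argument are the two facts flagged at the outset — namely $\distance \le \radius$ (ensuring $\arccos(\distance/\radius)$ is defined and $(1-\distance^2/\radius^2)^{(\dimension-1)/2}$ is real), which we borrow from the first axiom of \cref{eq:auditing_axioms}, and the identification of $W_\dimension$ with $\int_0^{\pi/2}\sin^{\dimension}(\theta)\,d\theta$; the parity hypothesis on $\dimension$ is inherited from \Cref{th:proba_ball} and plays no role here.
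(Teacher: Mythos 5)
Your proof is correct and follows essentially the same route as the paper: both deduce the lower bound from the closed form in \Cref{th:proba_ball} by bounding the partial integral $\int_0^{\arccos(\distance/\radius)}\sin^\dimension(\theta)\,d\theta$ above by the full Wallis integral $W_\dimension$ and rearranging, with the upper bound being trivial since $\probaAWins$ is a probability. Your side remark that the evenness of $n$ plays no real role is also consistent with the argument, since nonnegativity of $\sin^\dimension$ on $[0,\pi/2]$ holds for all $n$.
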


\begin{figure*}
    \centering
    \input{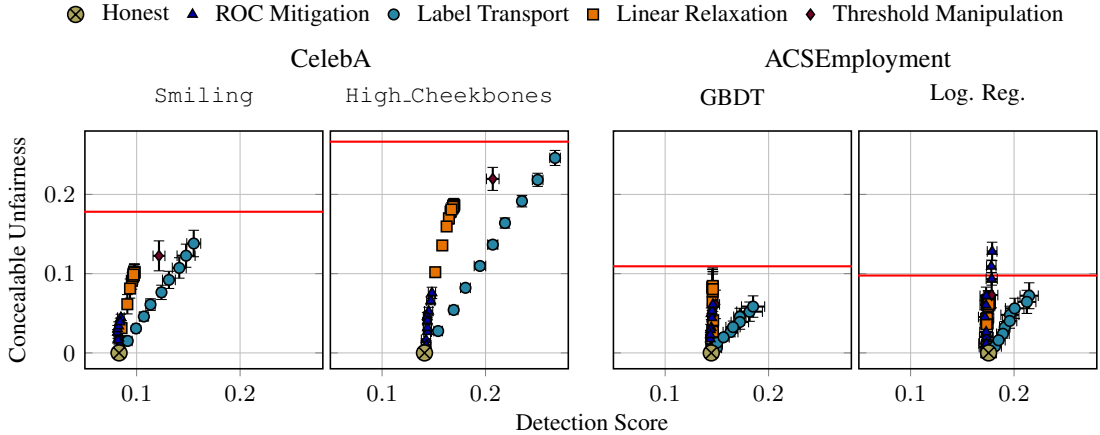}
\noindent%

\begin{tikzpicture}
\begin{groupplot}[
    width=135pt,
    height=135pt,
    ymin=-0.02, ymax=0.28,
    xmin=0.05, xmax=0.28,
    group style={
        group size=4 by 1,
        ylabels at=edge left,
        horizontal sep=1mm
    },
    xticklabel style={
        /pgf/number format/fixed,
        /pgf/number format/precision=2,
        font=\small %
    },
    yticklabel style={
        /pgf/number format/fixed,
        /pgf/number format/precision=3,
        font=\small %
    },  
	{cycle list/Set1},
    cycle list name=Set1,
    scatter/classes={
        honest={black, mark=otimes*, draw=black, fill=yellow!60!black, mark size=3},
        ROC_mitigation={blue!70!black, mark=triangle*, mark size=2, draw=black},
        label_transport={cyan!60!black, mark=*, mark size=2, draw=black},
        linear_relaxation={orange!90!black, mark=square*, mark size=2, draw=black},
        threshold_manipulation={purple!60!black, mark=diamond*, mark size=2, draw=black}
    },
    label style={font=\small}, %
    tick label style={font=\small}, %
    title style={font=\small},
    grid=both
]
    \nextgroupplot[
        title={\texttt{Smiling}},
        ylabel={Concealable Unfairness}
    ]

    \addplot [red, thick] coordinates {(0,0.17811088592320667) (1,0.17811088592320667)};

    \addplot[
        only marks,
        scatter,
        scatter src=explicit symbolic,
        mark=*,
        error bars/.cd,
        error bar style={ thick},
        x dir=both, x explicit,
        y dir=both, y explicit
    ] table[
        x=auditset_hamming, 
        y=hidden_demographic_parity,
        x error=auditset_hamming_std, 
        y error=hidden_demographic_parity_std,
        meta=strategy
    ] {./data/scatter_hamming_vs_hdp/Smiling.dat};
    
    \nextgroupplot[
        legend columns=-1,
        legend style={
            at={(1,1.4)}, %
            anchor=south, %
            draw=none, %
            column sep=5pt, %
            font=\small, %
            name=legend
        },
        xlabel style={
            at={(1.1, -0.15)}
        },
        legend cell align={left},
        title={\texttt{High\_Cheekbones}},
        xlabel={Detection Score},
        yticklabels={}
    ]

    \addplot [red, thick, forget plot] coordinates {(0,0.2665353234327136) (1,0.2665353234327136)};

    \addplot[
        only marks,
        scatter,
        scatter src=explicit symbolic,
        mark=*,
        error bars/.cd,
        error bar style={ thick},
        x dir=both, x explicit,
        y dir=both, y explicit
    ] table[
        x=auditset_hamming, 
        y=hidden_demographic_parity,
        x error=auditset_hamming_std, 
        y error=hidden_demographic_parity_std,
        meta=strategy
    ] {./data/scatter_hamming_vs_hdp/High_Cheekbones.dat};

    \legend{
        \honest[cap]{},ROC Mitigation,
        Label Transport,Linear Relaxation,Threshold Manipulation,
    }

    \nextgroupplot[
        title={\acs{GBDT}},
        yticklabels={},
        xshift=5mm
    ]

    \addplot [red, thick] coordinates {(0,0.10927252547674299) (1,0.10927252547674299)};
    
    \addplot[
        only marks,
        scatter,
        scatter src=explicit symbolic,
        mark=*,
        error bars/.cd,
        error bar style={ thick},
        x dir=both, x explicit,
        y dir=both, y explicit
    ] table[
        x=auditset_hamming, 
        y=hidden_demographic_parity,
        x error=auditset_hamming_std, 
        y error=hidden_demographic_parity_std,
        meta=strategy
    ] {./data/scatter_hamming_vs_hdp/ft-gbdt.dat};

    \nextgroupplot[
        title={\acs{Log. Reg.}},
        yticklabels={}
    ]

    \addplot [red, thick] coordinates {(0,0.0976531172175088) (1,0.0976531172175088)};
    
    \addplot[
        only marks,
        scatter,
        scatter src=explicit symbolic,
        mark=*,
        error bars/.cd,
        error bar style={ thick},
        x dir=both, x explicit,
        y dir=both, y explicit
    ] table[
        x=auditset_hamming, 
        y=hidden_demographic_parity,
        x error=auditset_hamming_std, 
        y error=hidden_demographic_parity_std,
        meta=strategy
    ] {./data/scatter_hamming_vs_hdp/ft-logistic.dat};

\end{groupplot}
\node[anchor=north] at ($(legend.south) + (-3.15,-0.05)$) {\celeba};
\node[anchor=north] at ($(legend.south) + (3.85,-0.05)$) {\ACS};
\end{tikzpicture}
    \caption{The concealable unfairness by the platform for different detection scores and manipulation strategies. We highlight this for two features of the \celeba dataset (left) and for two different \ac{ML} models trained on the \ACS~dataset (right). The horizontal red line indicates the \ac{DP} of the most unfair model without manipulation.}%
    \label{fig:hidden_vs_threshold}
\end{figure*}

\subsection{Practical Considerations and Discussion}
\label{subsec:practical_considerations}
In practice, $\radius$ is determined by the task difficulty, and the amount of data available to solve the task.
One possibility to tune the value of $\radius$ is to use the error rate of current state-of-the-art models that solve the task at hand as a minimum value.
We empirically explore this option in \cref{subsec:budget_dynamics}.
An alternative, if the auditor has the resources, would be to train a set of models on the task and use them to calibrate $\radius$.
We leave further exploration of the calibration of $\radius$ to future work.

On the other hand, the value of $\delta$ is determined by the audit set sampling procedure. 
In most cases, the audit set is sampled independently from a pre-specified audit distribution.
In this case, the value of $\delta$ is fully determined.
To regain some control over $\delta$, the auditor has to allow other audit set sampling strategies, at the expense of potential statistical bias in the fairness and accuracy estimations.

\paragraph*{Takeaway.} The auditor can always calculate \textit{a priori} the probability to correctly detect a \dishonest platform trying to be fair. This probability depends on the ratio between unfairness $\delta$ of the auditor prior and the chosen risk threshold $\radius$, and depends on the audit budget $n = \abs{S}$.

\section{Empirical Evaluation}
\label{sec:experiments}

We now empirically quantify the extent to which the platform can manipulate the unfairness of its \ac{ML} model.
To that end, we study the \emph{\hiddableUnfairness}: the maximum level of unfairness a platform can hope to hide before being detected as malicious. 
First, we evaluate the effectiveness of different manipulation strategies and determine the optimal one.
Since any practical fairness repair method can be used as a manipulation methods, we ask \RQ{1} What is the best manipulation strategy implementation (\Cref{sec:detection_vs_unfairness})?
Then, we study the dynamics of the \hiddableUnfairness when the audit budget $\abs{S}$ increases: \RQ{2} Can the auditor always find an audit budget that prevents the platform from hiding any unfairness, \ie, that always allows to flag the platform if malicious (\Cref{subsec:budget_dynamics})? 
\begin{figure*}[t]
    \centering
    \input{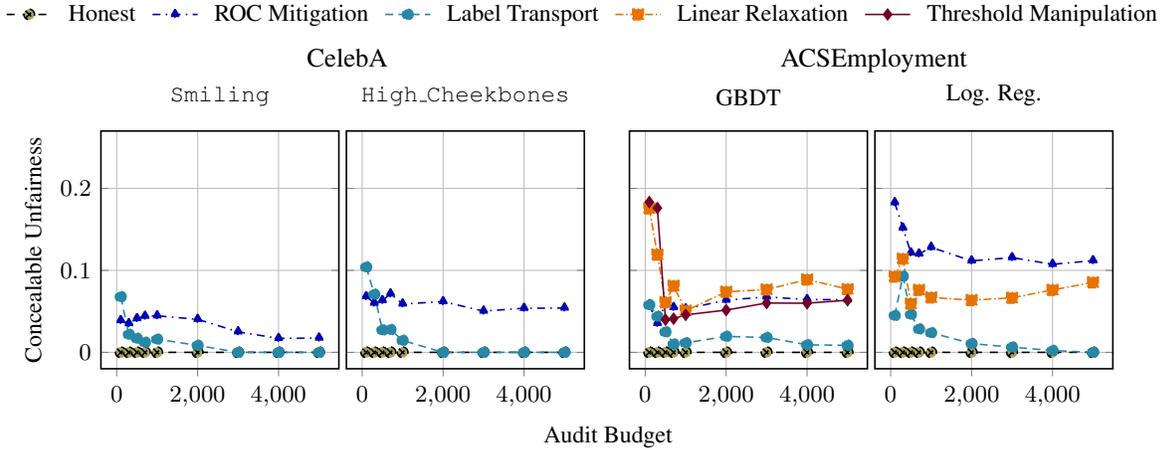}
\noindent%

\begin{tikzpicture}
\begin{groupplot}[
    width=135pt,
    height=135pt,
    ymin=-0.02, ymax=0.27,
    group style={
        group size=4 by 1,
        ylabels at=edge left,
        horizontal sep=1mm
    },
    xticklabel style={
        /pgf/number format/fixed,
        /pgf/number format/precision=2,
        font=\small %
    },
    yticklabel style={
        /pgf/number format/fixed,
        /pgf/number format/precision=3,
        font=\small %
    },
    label style={font=\small}, %
    tick label style={font=\small}, %
    title style={font=\small},
    grid=both
]
    \nextgroupplot[
        title={\texttt{Smiling}},
        ylabel={Concealable Unfairness}
    ]

    \foreach \strategy in {
        honest, 
        ROC_mitigation, 
        label_transport, 
        linear_relaxation, 
        threshold_manipulation,
    } {
        \addplot+[
            discard if not={strategy}{\strategy}
        ] table[
            x=audit_budget, 
            y=hidden_demographic_parity
        ]{./data/audit_budget_vs_hdp/Smiling.dat};
    }
    
    \nextgroupplot[
        legend entries={\honest[cap]{}, ROC Mitigation, Label Transport, Linear Relaxation, Threshold Manipulation},
        legend columns=-1,
        legend style={
            at={(1,1.4)}, %
            anchor=south, %
            draw=none, %
            column sep=4pt, %
            font=\small, %
            name=legend
        },
        xlabel style={
            at={(1.1, -0.2)}
        },
        legend cell align={left},
        title={\texttt{High\_Cheekbones}},
        xlabel={Audit Budget},
        yticklabels={}
    ]

    \foreach \strategy in {
        honest, 
        ROC_mitigation, 
        label_transport, 
        linear_relaxation, 
        threshold_manipulation,
    } {
        \addplot+[
            discard if not={strategy}{\strategy}
        ] table[
            x=audit_budget, 
            y=hidden_demographic_parity
        ]{./data/audit_budget_vs_hdp/High_Cheekbones.dat};
    }

    \nextgroupplot[
        title={\acs{GBDT}},
        yticklabels={},
        xshift=5mm
    ]
    
    \foreach \strategy in {
        honest, 
        ROC_mitigation, 
        label_transport, 
        linear_relaxation, 
        threshold_manipulation,
    } {
        \addplot+[
            discard if not={strategy}{\strategy}
        ] table[
            x=audit_budget, 
            y=hidden_demographic_parity
        ]{./data/audit_budget_vs_hdp/ft-gbdt.dat};
    }

    \nextgroupplot[
        title={\acs{Log. Reg.}},
        yticklabels={}
    ]
    
    \foreach \strategy in {
        honest, 
        ROC_mitigation, 
        label_transport, 
        linear_relaxation, 
        threshold_manipulation,
    } {
        \addplot+[
            discard if not={strategy}{\strategy}
        ] table[
            x=audit_budget, 
            y=hidden_demographic_parity
        ]{./data/audit_budget_vs_hdp/ft-logistic.dat};
    }
    
\end{groupplot}
\node[anchor=north] at ($(legend.south) + (-3.15,-0.05)$) {\celeba};
\node[anchor=north] at ($(legend.south) + (3.85,-0.05)$) {\ACS};
\end{tikzpicture}
    \caption{The concealable unfairness for different audit budgets (\ie, data samples from the labeled dataset). We highlight this for two features of the \celeba dataset (left) and for two different \ac{ML} models trained on the \ACS~dataset (right).
}
    \label{fig:hidden_vs_budget}
\end{figure*}

\subsection{Experimental Setup}
We conduct our experiments on tabular and vision modalities.
The tabular dataset comes from the \ACS task for the state of Minnesota in 2018, which is derived from US Census data and provided in \ft~\cite{folktables}.
The objective of this task is to predict whether an individual between the age of 16 and 90 is employed or not.
As input features of the model $\original$, we consider several attributes of the individual, including gender, race, and age. 
The fairness of the models is  evaluated along the race attribute given in the dataset: one group consists of individuals identified as ``white alone'', while the other includes all remaining individuals.

For the vision modality, we study \celeba~\cite{celeba_paper}, which consists of images of celebrities along with several binary attributes associated with each image, such as whether the person in the photo is blond, smiling, or if the photo is blurry. As input to a vision model, we use the image to predict one of the associated attributes. The target attribute varies across experiments and will be specified accordingly. \acl{DP} is evaluated along the gender attribute given in the dataset.
For the \ACS dataset, we train \ac{GBDT} and \ac{Log. Reg.} models, while for \celeba, we train a \lenet convolutional neural network~\cite{lenet_paper}.
\ac{GBDT} and \ac{Log. Reg.} are trained using the default parameters of their respective implementations in \textsc{Scikit-Learn}.
Meanwhile, \lenet is trained irrespective of the target attribute using the Adam optimizer with a learning rate of $\gamma = 0.001$, a batch size of 32, and for two epochs, which is sufficient for the model to converge on all features.
The code to run the experiments is available online.\footnote{See \url{https://github.com/grodino/merlin}.}

\subsection{Implementing Optimal Audit Manipulations}
In practice, computing the optimal manipulation $\manipulated = \proj_\fairModels(\original)$ amounts to solving: 
\begin{equation}\label{eq:practical_manipulation}
    \begin{array}{rl}
        \manipulated(S) \; \in \; \argmin & L(h, \set{(x, \original(x)): x \in S}) \\
        \text{s.t.}                             & \hat{\fairness}(h, S) < \tau
    \end{array}
\end{equation}

We note that this problem is the same problem solved by in-processing and post-processing fairness repair methods \cite{catonFairnessMachineLearning2024}.
Thus, ironically, computing the optimal manipulation is equivalent to choosing the optimal fairness repair method.
The only difference being on which set the fairness constraints and accuracy objectives are defined: the audit set $S$ instead of the training dataset.
Thus, since any practical fairness repair method can be repurposed for manipulation, we adapted four classical fairness repair methods: \ac{ROC}~\cite{kamiran2012decision}, \ac{OT-L}~\cite{jiangWassersteinFairClassification2020}, \ac{LinR}~\cite{lohausTooRelaxedBe2020} and \ac{ThreshOpt}~\cite{hardtEqualityOpportunitySupervised2016}.

\subsection{Quantifying the Concealable Unfairness For Different Detection Scores}
\label{sec:detection_vs_unfairness}
We next quantify the amount of fairness that a platform can hide for the different manipulation strategies above.
This \emph{\hiddableUnfairness} $\Delta_\fairness(\original, \manipulated)$ is defined as the \acl{DP} gap between the manipulated and honest models.

\begin{equation}
    \Delta_\fairness(\original, \manipulated) = \abs{\hat{\fairness}(\manipulated, S) - \hat{\fairness}(\original, S)} \label{eq:hiddable_unfairness}
\end{equation}

To decide whether the model observed during the audit is manipulated, the auditor has to decide whether $\manipulated \in \prior$ or not.
To do so, the auditor estimates $L(h, \dataPrior)$ by computing the \emph{detection score} $\detect(\manipulated, S)$.
\begin{align}
    \detect(\manipulated, S) &= \sum_{(x,y) \in S} \ind{\manipulated(x) \neq y} \label{eq:detection_score}
\end{align}

To build $(\original, \manipulated)$ model pairs, we consider manipulation methods among \ac{ROC}, \ac{OT-L}, \ac{LinR} and \ac{ThreshOpt}, varying hyperparameter values when applicable.
In \Cref{fig:hidden_vs_threshold}, we plot the value of the \hiddableUnfairness $\Delta_\fairness(\original, \manipulated)$ against the detection score $\detect(\manipulated, S)$ computed by the auditor.
We show the results of \lenet models trained on two \celeba targets (first and second subplots), and \ac{GBDT} and \ac{Log. Reg.} models trained on \ACS (third and fourth subplots).
The horizontal red lines indicates the \ac{DP} of the most unfair model without manipulation.

First, we observe that for all the datasets, the platform can conceal significant amounts of unfairness: from $10$ to $20$ points differences between the two protected groups. 
Comparing the \hiddableUnfairness values with the \ac{DP} of the most unfair honest model (red horizontal line), we observe that the manipulation strategies almost all able to totally conceal the original model unfairness.  
Then, focusing on the x axis, the difference in $\detect(\manipulated, S)$ between the different \honest models highlights the impact the performance of the platform's model should have on the detection threshold $\tau$.
In fact, depending on the dataset and on the model, $\detect(\honest, S)$ varies from $\sim0.1$ to $\sim0.2$. 
In \Cref{subsec:budget_dynamics}, we explore a solution to setup the threshold.

\subsection{Dynamics of the Concealable Unfairness as The
Audit Budget Increases}\label{subsec:budget_dynamics}

The probability of detecting manipulations (via the the detection score) should intuitively increase as the auditor gains access to a larger number of data samples (\ie has a higher audit budget) since this allows for a more accurate comparison of $\manipulated$ with the data prior $\prior$. 
In this experiment, we explore how well this intuition holds in practice.
For this purpose, we fix the hyperparameters for each manipulation method by selecting those that result in the highest concealable unfairness for a given base model, as discussed in \Cref{sec:detection_vs_unfairness}. Then, for each base model–target attribute pair, we determine the maximum concealable unfairness that a platform can achieve while ensuring that its detection score (see eq. \ref{eq:detection_score}) remains below the detection threshold. 
As proposed in \Cref{sec:dataset_prior} the threshold for each model is set to $1-x$, where $x$ represents the maximum accuracy achieved when training a set of models on the corresponding target.
This process is repeated for audit budgets ranging from $100$ to $5,000$.

The results of this experiment are shown in \Cref{fig:hidden_vs_budget}. 
The two plots on the left display the results for \celeba using the same base model but different target attributes, while the two plots on the right show results for \ACS using the same target attribute but different base models. 
These results reveal two distinct cases. 
In the first case (\celeba \texttt{Smiling} in \Cref{fig:hidden_vs_budget}), the concealable unfairness converges to zero as the audit budget increases.
This is due to the low aleatoric uncertainty associated to the \texttt{Smiling} target.
Since the task is easier, the accuracy range of models trained on \texttt{Smiling} is narrower, leading to a tighter detection threshold $\tau$.
In the second case (all the other facets of \Cref{fig:hidden_vs_budget}), the \hiddableUnfairness remains nonzero despite an increasing budget.

Furthermore, in many cases, even with a high audit budget, some increase of unfairness remains undetectable by the auditor. 
Consequently, the platform retains some capacity to conceal unfairness even at high audit budgets. This stresses the hardness of the auditor's task in some configurations, and lead to a negative answer to \RQ2.
In that light, we also observe that --in response to \RQ1--, the Linear Relaxation and ROC Mitigation manipulation strategies are the most effective for a manipulative platform.

\section{Related Work}
\label{sec:related}

\paragraph{Fairwashing and rationalization}
Addressing fairness issues often requires compromising model performance for advantaged groups which can discourage companies from embracing fair training practices~\cite{zietlowLevelingComputerVision2022,zhaoInherentTradeoffsLearning2022}.
Companies have two incentives to pay attention to the impact of their system on society.
The first incentive comes from regulatory efforts such as the \Acf{AAA}~\cite{aaa2022} (US) and the \Acf{DMA}~\cite{dma2022} (EU) that impose fairness, transparency, and accountability constraints on large digital platforms. 
Yet, how to enforce these regulations is still an open problem~\cite{cremer2023enforcing}.
The second incentive is public image. Since fairness, transparency and accountability are laudable goal, audits, investigative journalism and certifications \cite{costanza2022audits} should force companies to pay attention to these objectives.
However, both incentives are external: the platform just has to \emph{appear} fair, transparent and accountable. 
This rationalization risk has been studied in the context of explanations fairwashing~\cite{aivodjiFairwashingRiskRationalization2019a,aivodjiCharacterizingRiskFairwashing2021,shamsabadiWashingUnwashableImpossibility2022}.

\paragraph{Fairness auditing}
Fairness auditing evaluates \ac{ML} models to ensure fairness and accountability, often without access to proprietary model internals~\cite{ng2021auditing}.
This black-box auditing approach relies on querying the model and analyzing its outputs against pre-defined fairness metrics \cite{birhaneAIAuditingBroken2024,de2024fairness}.
Current attempts to enhance fairness audits with tangible guarantees draw inspiration from hypothesis testing~\cite{siTestingGroupFairness2021,taskesenStatisticalTestProbabilistic2021,diciccioEvaluatingFairnessUsing2020,cenTransparencyAccountabilityBack2024,cherianStatisticalInferenceFairness2024,benesseFairnessSeenGlobal2024}, online fairness auditing~\cite{chuggAuditingFairnessBetting2023,manerikerOnlineFairnessAuditing2023}, and formal methods for fairness certification~\cite{albarghouthiFairSquareProbabilisticVerification2017,ghoshJusticiaStochasticSAT2021,ghoshAlgorithmicFairnessVerification2022,borca-tasciucProvableFairnessNeural2022}.
Beyond statistical methods, the work of Yadav et al. explore the role of explanations in the auditing process~\cite{yadavXAuditTheoreticalLook2023}.
Recent works also stress the importance of broadening the lens of algorithm auditing by incorporating user perspectives and sociotechnical factors \cite{lamSociotechnicalAuditsBroadening2023,dengUnderstandingPracticesChallenges2023}.
On another line of research, Confidential-PROFITT and FairProof propose to integrate cryptographic techniques in cooperation with the platforms, to ensure the faithfulness of platform responses during audits~\cite{yadavFairProofConfidentialCertifiable2024,shamsabadiConfidentialPROFITTConfidentialPROof2023,waiwitlikhitTrustlessAuditsRevealing2024}; this is, however, more intrusive and technically restrictive, and thus awaits for adoption.

\paragraph{Manipulating audits}
Manipulating fairness audits is an active area of research.
Auditors can be fooled by biased sampling when the decision maker is allowed to publish a labeled dataset as proof of model fairness~\cite{fukuchiFakingFairnessStealthily2020a}.
Adversarial attacks on explanation methods, such as \textsc{LIME} and \textsc{SHAP}, can be employed to produce misleading interpretations of model behavior \cite{fokkemaAttributionbasedExplanationsThat2023,shamsabadiWashingUnwashableImpossibility2022,labergeFoolingSHAPStealthily2022,slackFoolingLIMESHAP2020,andersFairwashingExplanationsOffmanifold2020,aivodjiFairwashingRiskRationalization2019a,lemerrerRemoteExplainabilityFaces2020}.
Platforms can also modify the output of their models to create the appearance of fairness without addressing underlying biases \cite{yanActiveFairnessAuditing2022, bourreeRelevanceAPIsFacing2023,godinotManipulationsAreAI2024}..
However, the challenge of designing audits that are robust to advanced manipulation strategies remains open.
The idea of using auditor prior knowledge that we formalize in this work has been implicitly studied in different contexts.
Based on active learning techniques work has studied how auditors could leverage knowledge about the hypothesis class~\cite{yanActiveFairnessAuditing2022,godinotManipulationsAreAI2024}. 
In a more practical setting, \citeauthor{tanDistillandCompareAuditingBlackBox2018} studied using model distillation methods~\cite{tanDistillandCompareAuditingBlackBox2018} to use prior about the ground truth and hypothesis class~\cite{tanDistillandCompareAuditingBlackBox2018}.

\section{Conclusion and Discussion}
\label{sec:conclusion}
We investigated, both theoretically and experimentally, the conditions under which an auditor can or cannot be manipulated when auditing with a prior. We introduced an empirical method for tuning the manipulation detection threshold to maximize the auditor's probability of detecting malicious platforms. 

While our work offers regulators a framework for defending against audit manipulations, the path to accountability extends much further. A significant gap remains between audit evaluations and the actual mitigation of identified issues~\cite{raji2021ai,mukobi2024reasons}. Moreover, one-time audits are inherently limited, as platforms can alter their models in harmful ways after the audit has concluded. Addressing these challenges in future work will require the development of continuous or adaptive auditing mechanisms, potentially incorporating auditor priors, to ensure sustained accountability and fairness.

\section*{Impact Statement}
This work provides both theoretical and empirical analyses of fairness audits in \ac{ML} decision-making systems, with a focus on their vulnerability to strategic manipulations by platforms aiming to evade regulatory scrutiny. By demonstrating how auditors' access to prior knowledge can enhance the robustness of black-box audits, we offer actionable insights for mitigating potential audit-a manipulations. Our findings have important implications for policymakers, auditors, and \ac{ML} practitioners, underscoring the urgent need for rigorous auditing frameworks resilient to adversarial behavior.

The societal impact of this work is twofold. On the positive side, strengthening the robustness of fairness audits promotes greater accountability for platforms deploying \ac{ML} models in high-stakes domains such as finance and healthcare. By mapping the risk landscape of audit manipulation, our approach advances the development of more trustworthy \ac{ML} systems. However, we also draw attention to the limitations of current audit practices, showing that over-reliance on public priors can be exploited by strategic actors.

\section*{Acknowledgements}
This work of Martijn de Vos, Milos Vujasinovic, Sayan Biswas, and Anne-Marie Kermarrec has been funded by the Swiss National Science Foundation, under the project ``FRIDAY: Frugal, Privacy-Aware and Practical Decentralized Learning'', SNSF proposal No. 10.001.796.
Jade Garcia Bourrée, Augustin Godinot, Gilles Trédan and Erwan Le Merrer acknowledge the support of the French Agence Nationale de la Recherche (ANR), under grant ANR-24-CE23-7787 (project PACMAM).

\bibliography{main.bib}
\bibliographystyle{icml2025}

\newpage
\appendix
\onecolumn

\section{Proofs and additional theoretical results}\label{a:theory}
\begin{table}[t]
\caption{Notations}
\label{t:notations}
\vskip 0.15in
\begin{center}
\begin{small}
\begin{sc}
\begin{tabular}{cc}
\toprule
$\hypotClass$ & Hypothesis class \\
$\fairModels$ & Set of fair models\\
$\prior$ & Set of expectable models\\
$\groundtruth$ & Ground truth \\
$\distance$ & Distance between the groundtruth and the set of expectable model\\
$\original$ & Original model of the platform\\
$\manipulated$ & Manipulated model of the platform \\
$\inSpace$ & Input space\\
$\dataDistrib$ & Data distribution\\
$\inSamples$ & Sample from input space\\
$\outSpace$ & Output space\\
$\outSamples$ & Sample from output space\\
$\protectedSpace$ & Protected feature\\
$\sampleSpace$ & Sample space \\ %
$\samples$ & Sample\\
$\dimension$ & dimension of $\sampleSpace$ \\
\bottomrule
\end{tabular}
\end{sc}
\end{small}
\end{center}
\vskip -0.1in
\end{table}

As in \cite{buyl2022optimal}, let $\sampleSpace \triangleq \inSpace \times \protectedSpace \times \binaryOutSpace$ denote the sample space, from which the auditor draws samples $\samples \triangleq (\inSamples, \protectedSamples, \outSamples)$. The auditor sample the binary predictions $\hat{\outSamples} \in \binaryOutSpace$ from a probabilistic classifier $h : \inSpace \rightarrow \scoreOutSpace$ that assigns a score $h(X)$ to the belief that a sample with features $\inSamples$ belongs to the positive class.
It is assumed that $\inSpace \subset \mathbb{R}^{d_\inSpace}$ and %
$\protectedSpace = \{ [A_0, A_1] / A_0, A_1 \in \binaryOutSpace \} = \{[1, 0], [0, 1]\}$(the one-hot encoding of the protected feature with two groups). %
We also assume that $\prior$ is an open set of $\sampleSpace$.

We denote $\fairModels$ the set of all score functions $f : \inSpace \rightarrow \binaryOutSpace$ that satisfy (PDP):
$$
\fairModels \triangleq \{ f : \inSpace \rightarrow \binaryOutSpace : \esp[\samples]{g(\samples) f(\inSamples)} = 0_{\dimension}\}
$$
with $\forall k \in [2], g_k = \frac{A_k}{\esp[Z]{A_k}} - 1$, $0_{\dimension}$ a vector of $\dimension = d_\fairModels$ zeros.

Assuming that the predictions $\hat{\outSamples} | \inSamples$ are randomly sampled from a probabilistic classifier $h(\inSamples)$, then the traditional fairness notion of demographic parity (DP) is equivalent to PDP. But if $\hat{\outSamples}$ is not sampled from $h(\inSamples)$ but instead decided by a threshold, DPD is a relaxation of the actual DP notion. That is to say, $\fairModels$ is the set of all score functions that are fair regarding the demographic parity on $\protectedSpace$. 

As $\fairModels$ is the kernel of the linear transformation $f : \esp[\samples]{g(\samples) f(\inSamples)}$, $\fairModels$ is a hyperplane of $\sampleSpace$.

As $\fairModels$ is a hyperplane of $\sampleSpace$, it is dense or closed in $\sampleSpace$.

\subsection{Cases where $\mathcal{F}$ is dense in $\mathcal{Z}$.}

\begin{lemma}
     If $\fairModels$ is dense in $\sampleSpace$, the auditor has a probability to detects it as manipulated equals to zero.
\end{lemma}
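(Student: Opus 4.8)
The plan is to exploit two ingredients already in place: the auditing axiom $\original \in \prior$ together with the standing assumption that $\prior$ is an \emph{open} subset of $\sampleSpace$, and the topological hypothesis that $\fairModels$ is dense in $\sampleSpace$. The single observation that does all the work is that density forces $d(\original, \fairModels) = 0$ for every $\original$, so the optimal manipulation --- the projection of $\original$ onto $\fairModels$ in the sense of \Cref{eq:optimal_manipulation} --- can be realized arbitrarily close to $\original$; and since $\prior$ is an open neighbourhood of $\original$, any such nearby fair model is itself expectable. Hence the model the platform submits is simultaneously fair and in $\prior$, and the detection test $\manipulatedOptimal \notin \prior$ can never fire.

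Concretely, I would argue as follows. First, since $\original \in \prior$ and $\prior$ is open, fix $\epsilon > 0$ with $B(\original, \epsilon) \subseteq \prior$. Second, since $\fairModels$ is dense in $\sampleSpace$, $\inf_{h \in \fairModels} d(h, \original) = 0$, so there is a fair model $\manipulatedOptimal \in \fairModels$ with $d(\manipulatedOptimal, \original) < \epsilon$; that is, the projection $\proj_\fairModels(\original)$ (or a near-minimizer of the objective in \Cref{eq:optimal_manipulation}) can be taken inside $B(\original, \epsilon)$. Third, combining the two inclusions gives $\manipulatedOptimal \in B(\original, \epsilon) \subseteq \prior$, hence $\manipulatedOptimal \in \prior \cap \fairModels$. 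Consequently the event $\{\manipulatedOptimal \notin \prior\}$ is empty once we condition on $\original \in \prior$, and by definition of the detection rate $\probaAWins = P(\manipulatedOptimal \notin \prior \mid \original \in \prior) = 0$, which is the claim.

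The one delicate point --- and the only step needing care --- is the status of $\proj_\fairModels$ when $\fairModels$ is dense but not closed: the $\argmin$ in \Cref{eq:optimal_manipulation} is then not attained, so $\manipulatedOptimal$ is not literally a single well-defined point. I would dispatch this by noting that the lemma only asserts that the auditor's detection probability is zero, and the argument already exhibits a successful manipulation lying in $\prior \cap \fairModels$; equivalently, one may adopt the natural convention that when the projection is not attained the platform uses \emph{any} fair model whose risk on $\dataPrior$ is below the threshold $\radius$, which by density always exists. Either way the conclusion $\probaAWins = 0$ holds, and this is precisely the impossibility phenomenon of \Cref{thm:public_prior_impossibility} re-expressed at the level of the score space: a dense $\fairModels$ makes $\prior \cap \fairModels$ "as rich as $\prior$ itself" from the platform's standpoint.
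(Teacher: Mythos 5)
Your argument is correct and follows essentially the same route as the paper's proof: density of $\fairModels$ plus openness of $\prior$ and the axiom $\original \in \prior$ yield a fair model inside $\prior$ arbitrarily close to $\original$, so the platform can appear both fair and expectable and the detection probability is zero. Your extra remark about the $\argmin$ in the projection not being attained when $\fairModels$ is dense is a welcome clarification, but it does not change the substance of the argument.
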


\begin{proof}
    If $\fairModels$ is dense in $\sampleSpace$ then for every function $f \in \sampleSpace$, every open neighborhood of $f$ intersects $\fairModels$. In particular, it always exists a model $\manipulated \in \fairModels$ that is in a neighborhood of $\original$ and in $\prior$. In that case, $\manipulated$ is fair and expectable, so the auditor has a probability to detects it as manipulated equals to zero.
\end{proof}

This case is a pathological case where the platform can still appear fair and honest. For the next theoretical results, we are interested in the case where $\fairModels$ is closed in $\sampleSpace$.

\subsection{Cases where $\mathcal{F}$ is closed in $\mathcal{Z}$.}

If $\fairModels$ is a hyperplan closed in $\sampleSpace$, it has an empty interior (\textit{i.e.} $\partial\fairModels = \emptyset$) as its codimension is $1$.
In the following, we can thus use $\fairModels$ instead of $\partial\fairModels$, as both are equals.

Similarly, we can define the normal vector to $\fairModels$ which is actually the vector that is used for all the projections we use in this paper. In~\Cref{eq:optimal_manipulation}, we defined
$\manipulatedOptimal = \proj_\fairModels(\original)$ (i.e. $\manipulatedOptimal$ is the orthographic projection of the expectable model $\original$ in the set of fair models $\fairModels$).

Having an hyperplan lead to the natural definition of (hyper)cylinder, that we use in the following theorem.

\begin{definition}
\label{def:hyp-cylinder}
A right cylinder $\cylinder{H}{B}$ is the set of all points whose orthographic projection on a hyperplane $H$ lies in a set $B$ with $B$ a subset of the boundary of $H$. $B$ is called the base of the cylinder.
\end{definition}

\begin{theorem}\label{th:probaCylinder}
The probability $\probaAWins$ that the auditor correctly detects a \dishonest platform trying to be fair is $P(\prior \backslash C(\fairModels, \prior \cap \partial \fairModels) | \prior)$.
\end{theorem}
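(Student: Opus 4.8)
The plan is to unfold the definitions and reduce the statement to the elementary equivalence
\[
\manipulatedOptimal \notin \prior \quad\Longleftrightarrow\quad \original \notin \cylinder{\fairModels}{\prior \cap \partial \fairModels},
\]
valid for every $\original \in \prior$. Once this is in hand, conditioning both sides on the event $\{\original \in \prior\}$ --- which always holds by the second auditing axiom in \cref{eq:auditing_axioms} --- immediately gives $\probaAWins = P(\manipulatedOptimal \notin \prior \mid \original \in \prior) = P\bigl(\original \in \prior \setminus \cylinder{\fairModels}{\prior \cap \partial \fairModels} \mid \original \in \prior\bigr)$, which is the claim. So the whole proof is a chain of "if and only if" rewritings.

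First I would recall the two ingredients. By \cref{eq:optimal_manipulation}, the manipulative platform (playing optimally) outputs $\manipulatedOptimal = \proj_\fairModels(\original)$, so in particular $\manipulatedOptimal \in \fairModels$; since we are in the case where $\fairModels$ is a codimension-one closed set of $\sampleSpace$, we have $\partial \fairModels = \fairModels$, hence $\manipulatedOptimal \in \partial \fairModels$ regardless of $\original$. Consequently, membership of $\manipulatedOptimal$ in $\prior$ coincides with membership in $\prior \cap \partial \fairModels$, i.e. $\manipulatedOptimal \in \prior \iff \manipulatedOptimal \in \prior \cap \partial \fairModels$.

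Next I would invoke \Cref{def:hyp-cylinder}: the cylinder $\cylinder{\fairModels}{\prior \cap \partial \fairModels}$ is, by definition, the set of points whose orthographic projection on $\fairModels$ lies in the base $\prior \cap \partial \fairModels$. Thus $\original \in \cylinder{\fairModels}{\prior \cap \partial \fairModels} \iff \proj_\fairModels(\original) \in \prior \cap \partial \fairModels \iff \manipulatedOptimal \in \prior \cap \partial \fairModels$. Chaining this with the previous paragraph gives $\original \in \cylinder{\fairModels}{\prior \cap \partial \fairModels} \iff \manipulatedOptimal \in \prior$, whose contrapositive is exactly the displayed equivalence. Taking complements within $\prior$ and then passing to the conditional probability $P(\,\cdot \mid \original \in \prior)$ concludes.

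I do not expect a genuine obstacle: the content is purely definitional, and the restriction to the case where $\fairModels$ is closed --- made just before the statement --- is precisely what guarantees that the projection $\proj_\fairModels$ is well defined and single-valued and that $\partial \fairModels = \fairModels$, so the cylinder's base is a subset of $\partial \fairModels$ as required by \Cref{def:hyp-cylinder}. The only points that deserve an explicit sentence are (i) that the platform is assumed to manipulate optimally, so that "correct detection" is exactly the event $\{\manipulatedOptimal \notin \prior\}$, and (ii) that the conditioning event $\{\original \in \prior\}$ is consistent with the auditing axioms and has positive probability, so the stated conditional probability is well defined.
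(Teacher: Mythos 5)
Your proposal is correct and follows essentially the same route as the paper's own proof: both reduce detection to the chain of definitional equivalences $\manipulatedOptimal \notin \prior \iff \proj_\fairModels(\original) \notin \prior \cap \partial\fairModels \iff \original \notin \cylinder{\fairModels}{\prior \cap \partial\fairModels}$, then use the axiom $\original \in \prior$ from \cref{eq:auditing_axioms} to condition on $\prior$ and obtain $\probaAWins = P(\prior \setminus \cylinder{\fairModels}{\prior \cap \partial\fairModels} \mid \prior)$. Your explicit remarks on $\partial\fairModels = \fairModels$ in the closed case and on the well-definedness of the conditional probability are slightly more careful than the paper's wording, but do not change the argument.
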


\begin{proof}

The auditor correctly detects a \dishonest platform trying to be fair if and only if the manipulated model is fair but not expectable. The manipulated model is fair but not expectable if and only if the orthographic projection $\manipulatedOptimal$ of $\original$ in $\fairModels$ is not in $\prior \cap \partial \fairModels$. Thus, the manipulated model is fair but not expectable if and only if $\original \notin C(\fairModels, \prior \cap \partial \fairModels)$ (following \Cref{def:hyp-cylinder}). As by assumption $\original \in \prior$ (\Cref{eq:auditing_axioms}), it means that $\original \in \prior \backslash C(\fairModels, \prior \cap \partial \fairModels)$. The auditor correctly detects a \dishonest platform trying to be fair with probability $P(\prior \backslash C(\fairModels, \prior \cap \partial \fairModels) | \prior)$.
\end{proof}

\Cref{th:proba_ball} is a special case of \Cref{th:probaCylinder} with additional assumption. We now prove the main Theorem \Cref{th:proba_ball}.

\probaBall*

\begin{proof}
    As established in \Cref{th:probaCylinder}, $\probaAWins = P(\prior \backslash C(\fairModels, \prior \cap \partial \fairModels) | \prior)$.

    The probability $P(\prior \backslash C(\fairModels, \prior \cap \partial \fairModels) | \prior)$ is the probability to be in the ball $\prior$ without the probability to be in the intersection between the ball $\prior$ and the cylinder $C(\fairModels, \prior \cap \partial \fairModels)$. In the following, we denote $\dashedVolume{\radius}{\distance}{\dimension}$ this quantity.

    As $\prior$ is a ball, its volume is:
    $$
    \ballVolume{\radius}{\dimension} = \frac{\pi^{\dimension/2}\radius^\dimension}{\Gamma (\frac{\dimension+2}{2})}
    $$
    with $\Gamma (z) = \int_0^\infty t^{z-1}e^{-t}dt$ ~\cite{NIST:DLMF}.

    The volume of the intersection between the cylinder and the ball is the sum of the three following volumes:
    \begin{itemize}
        \item the solid cylinder with height between $-\distance$ and $\distance$
        \item the spherical cap of $\prior$ that is \textit{above} the previous cylinder (i.e. the part of $\prior$ with height between $\distance$ and $\radius$)
        \item the spherical cap of $\prior$ that is \textit{bellow} the previous cylinder (i.e. the part of $\prior$ with height between $-\distance$ and $-\radius$)
    \end{itemize}

    According to~\cite{li2010concise}, the volume of each spherical cap is
    $$
    \capVolume{\radius}{\distance}{\dimension} = \frac{\pi^{(\dimension-1)/2}\radius^\dimension}{\Gamma (\frac{\dimension+1}{2})}\int_0^{\arccos(\distance/\radius)}\sin^\dimension(\theta)d\theta 
    $$

    And the volume of the cylinder of height $2\distance$ is
    $$
    \cylinderVolume{\radius}{\distance}{\dimension} =  2 \distance \ballVolume{\sqrt{\radius^2 - \distance^2}}{\dimension-1}
    $$

    Thus, 
    \begin{align*}
        \dashedVolume{\radius}{\distance}{\dimension} &= \ballVolume{\radius}{\dimension} - 2 \capVolume{\radius}{\distance}{\dimension} - \cylinderVolume{\radius}{\distance}{\dimension}\\
        &= \frac{\pi^{\dimension/2}\radius^\dimension}{\Gamma (\frac{\dimension+2}{2})} - 2 \frac{\pi^{(\dimension-1)/2}\radius^\dimension}{\Gamma (\frac{\dimension+1}{2})}\int_0^{\arccos(\distance/\radius)}\sin^\dimension(\theta)d\theta - 2 \distance \frac{\pi^{(\dimension-1)/2}(\sqrt{\radius^2 - \distance^2})^{\dimension-1}}{\Gamma (\frac{\dimension+1}{2})} 
    \end{align*}

    According to \Cref{th:probaCylinder}, the probability that the auditor correctly detects a \dishonest platform trying to be fair is $P(\prior \backslash C(\fairModels, \prior \cap \partial \fairModels) | \prior)$. That is to say, it is the ratio of $\dashedVolume{\radius}{\distance}{\dimension}$ over $\ballVolume{\radius}{\dimension}$:

    \begin{align*}
        \probaAWins &=  P(\prior \backslash C(\fairModels, \prior \cap \partial \fairModels) | \prior)\\
        &= \frac{\dashedVolume{\radius}{\distance}{\dimension}}{\ballVolume{\radius}{\dimension}}\\
        &= 1 - 2 \frac{\Gamma (\frac{\dimension+2}{2})}{\Gamma (\frac{\dimension+1}{2})} \frac{\pi^{(\dimension-1)/2}}{\pi^{\dimension/2}}\int_0^{\arccos(\distance/\radius)}\sin^\dimension(\theta)d\theta - 2\distance \frac{(\radius^2 - \distance^2)^{(\dimension-1)/2}}{\radius^\dimension} \frac{\Gamma (\frac{\dimension+2}{2})}{\Gamma (\frac{\dimension+1}{2})} \frac{\pi^{(\dimension-1)/2}}{\pi^{\dimension/2}}\\
        &= 1 - \frac{2}{\sqrt{\pi}} \frac{\Gamma (\frac{\dimension+2}{2})}{\Gamma (\frac{\dimension+1}{2})}\int_0^{\arccos(\distance/\radius)}\sin^\dimension(\theta)d\theta - \frac{2\distance}{\sqrt{\pi}} \frac{(\radius^2 - \distance^2)^{(\dimension-1)/2}}{\radius^\dimension} \frac{\Gamma (\frac{\dimension+2}{2})}{\Gamma (\frac{\dimension+1}{2})}\\
        &= 1 - \frac{2}{\sqrt{\pi}} \frac{\Gamma (\frac{\dimension+2}{2})}{\Gamma (\frac{\dimension+1}{2})} \left(
        \int_0^{\arccos(\distance/\radius)}\sin^\dimension(\theta)d\theta - \distance \frac{(\radius^2 - \distance^2)^{(\dimension-1)/2}}{\radius^\dimension}\right)\\
    \end{align*}

    The function $\Gamma$ can be written with Wallis' integrals as:
    $W_\dimension = \frac{\sqrt{\pi}}{2}\frac{\Gamma (\frac{\dimension+1}{2})}{\Gamma (\frac{\dimension + 2}{2})}$
    with $\forall \dimension, W_\dimension = \int_0^{\pi/2} \sin^\dimension(\theta)d\theta$.

    In the other hand,
    \begin{align*}
        \distance \frac{(\radius^2-\distance^2)^{(\dimension-1)/2}}{\radius^\dimension} &= \frac{\distance}{\radius} \frac{(\radius^2-\distance^2)^{(\dimension-1)/2}}{\radius^{\dimension-1}}\\
        &= \frac{\distance}{\radius} \left ( \frac{\radius^2 - \distance^2}{\radius^2}\right ) ^{(n-1)/2}\\
        &= \frac{\distance}{\radius} \left ( 1 - \frac{\distance^2}{\radius^2}\right ) ^{(n-1)/2}\\
    \end{align*}

    Thus, $\probaAWins = 1 - \frac{1}{W_\dimension}\left( \int_0^{\arccos(\distance/\radius)}\sin^\dimension(\theta)d\theta - \frac{\distance}{\radius} \left ( 1 - \frac{\distance^2}{\radius^2}\right ) ^{(n-1)/2}\right).$
\end{proof}

Before dealing with this complete expression, we propose some particular cases that are easily interpretable.

\begin{corollary}
    \label{cor:bounds}
    If $\prior$ is a ball centered in the ground-truth $\groundtruth$ that is tangent to $\fairModels$, then the auditor has a probability one to correctly detect a \dishonest platform trying to be fair.
    
    $$\prior = B(\groundtruth,\radius) \land \radius = \distance \implies \probaAWins = 1.$$

    with $\distance = d(\groundtruth, \fairModels)$, the distance of $\groundtruth$ to $\fairModels$.
\end{corollary}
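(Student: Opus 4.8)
The statement to prove is \Cref{cor:bounds}: if $\prior = B(\groundtruth, \radius)$ and $\radius = \distance = d(\groundtruth, \fairModels)$, then $\probaAWins = 1$. The plan is to treat this as the limiting case $\distance/\radius \to 1$ in the closed-form expression of \Cref{th:proba_ball}, or — more cleanly — to argue directly from the geometry via \Cref{th:probaCylinder}. I would present the direct geometric argument first since it is the most transparent, and then optionally confirm it against the formula.

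First I would set $\distance = \radius$ and observe that, since $\distance = d(\groundtruth, \fairModels)$ is exactly the distance from the center $\groundtruth$ of the ball to the hyperplane $\fairModels$, the ball $\prior = B(\groundtruth, \radius)$ is \emph{tangent} to $\fairModels$: it touches $\fairModels$ at the single point $\manipulatedOptimal = \proj_\fairModels(\groundtruth)$ and lies entirely on one side of $\fairModels$. Now recall from the proof of \Cref{th:probaCylinder} that the auditor fails to detect a malicious platform with original model $\original$ precisely when the projection $\proj_\fairModels(\original)$ lands inside $\prior \cap \partial\fairModels = \prior \cap \fairModels$, equivalently when $\original \in C(\fairModels, \prior \cap \fairModels)$. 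So I need to show this cylinder, intersected with $\prior$, has measure zero (under the uniform/uninformative prior on $\prior$).

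The key step: since $\prior$ is tangent to $\fairModels$, the intersection $\prior \cap \fairModels$ is the single point $\{\manipulatedOptimal\}$, which has zero $(\dimension-1)$-dimensional measure within $\fairModels$. Hence the cylinder $C(\fairModels, \{\manipulatedOptimal\})$ is just the line through $\manipulatedOptimal$ normal to $\fairModels$, a set of zero $\dimension$-dimensional Lebesgue measure, so $P(C(\fairModels, \prior\cap\fairModels) \mid \prior) = 0$ and therefore $\probaAWins = P(\prior \setminus C(\fairModels, \prior\cap\fairModels)\mid \prior) = 1$. As a cross-check I would plug $\distance = \radius$ into the formula of \Cref{th:proba_ball}: the term $\frac{\distance}{\radius}(1 - \distance^2/\radius^2)^{(n-1)/2}$ vanishes (assuming $n \geq 2$, which holds since $n = \abs{S}$), and $\int_0^{\arccos(1)}\sin^\dimension(\theta)\,d\theta = \int_0^0 = 0$, so the whole bracketed expression is $0$ and $\probaAWins = 1 - 0 = 1$.

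The main obstacle is really just making the measure-zero argument rigorous under the ``uninformative prior'' assumption — i.e., confirming that a uniform distribution on the ball $\prior$ assigns zero mass to the normal line through the tangency point, which is immediate since that line is a lower-dimensional affine subspace. A secondary subtlety is handling the boundary/closure carefully: $\prior$ is taken to be an open ball and $\fairModels$ is closed with empty interior, so strictly $\prior \cap \fairModels$ could even be empty rather than the single tangency point (the tangency point lies on $\partial\prior$, not in the open ball); either way the conclusion $\probaAWins = 1$ follows, and I would note this case distinction explicitly rather than gloss over it.
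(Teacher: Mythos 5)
Your proposal is correct, but your primary argument takes a different route from the paper. The paper proves \cref{cor:bounds} purely by substitution into the closed-form expression of \Cref{th:proba_ball}: with $\distance/\radius = 1$ one gets $\arccos(1) = 0$, so the integral vanishes, and the term $\frac{\distance}{\radius}\left(1 - \frac{\distance^2}{\radius^2}\right)^{(\dimension-1)/2}$ vanishes as well, giving $\probaAWins = 1 - \frac{1}{W_\dimension}(0-0) = 1$ --- exactly your ``cross-check''. Your main argument instead goes back to \Cref{th:probaCylinder} and argues geometrically: at tangency the base $\prior \cap \fairModels$ degenerates to (at most) the single tangency point, so the cylinder $C(\fairModels, \prior \cap \fairModels)$ is a normal line of zero $\dimension$-dimensional Lebesgue measure, hence null under the uniform (uninformative) prior on the ball, and $\probaAWins = 1$. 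This buys you two things the paper's proof does not: independence from the correctness of the volume computation in \Cref{th:proba_ball} (the conclusion follows from the cruder \Cref{th:probaCylinder} alone), and an explicit treatment of the open-ball subtlety (the tangency point lies on $\partial\prior$, so $\prior \cap \fairModels$ may even be empty), which the paper glosses over. What the paper's plug-in approach buys is brevity and consistency with presenting the corollary as a direct specialization of the main formula; your version requires $\dimension \geq 2$ for the measure-zero step, a restriction you correctly flag and which is harmless in context.
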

\begin{proof}
    If $\prior$ is tangent to $\fairModels$ then $\distance = \radius$. Thus, $\arccos(\distance/\radius) = \arccos(1) = 0$ and $\int_0^{\arccos(\distance/\radius)}\sin^\dimension(\theta)d\theta = 0$.

    Thanks to the formula of \Cref{th:proba_ball} with $\distance/\radius = 1$, $\probaAWins = 1 - \frac{1}{W_\dimension} (0 - 0) = 1$.

    If $\prior$ is tangent to $\fairModels$, $\probaAWins = 1$.
\end{proof}

This corollary means that by reducing the threshold $\radius$ to the minimal value ($\distance$), the auditor is sure to detect any manipulation of the platform.

\begin{corollary}\label{cor:fairBound} %
    
    If $\prior$ is a ball centered in the ground-truth $\groundtruth$ that is fair, then the auditor has a probability zero to correctly detect a \dishonest platform trying to be fair.
    
    $$\prior = B(\groundtruth,\radius) \land \groundtruth \in \partial \fairModels \implies \probaAWins = 0.$$
\end{corollary}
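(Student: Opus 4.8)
The statement follows by specializing the closed-form detection rate of \Cref{th:proba_ball} to the degenerate case $\distance = 0$. First I would record the key observation: the hypothesis $\groundtruth \in \partial\fairModels$ means precisely that the ground truth already lies on the fair hyperplane, so $\distance = d(\groundtruth, \fairModels) = 0$. Since $\prior = B(\groundtruth, \radius)$ is a ball centered at $\groundtruth$, this places the center of $\prior$ on $\fairModels$.

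\textbf{Main computation.} Next I would substitute $\distance = 0$ into the formula
\[
    \probaAWins = 1 - \frac{1}{W_\dimension}\left( \int_0^{\arccos(\distance/\radius)}\sin^\dimension(\theta)\,d\theta - \frac{\distance}{\radius} \left( 1 - \frac{\distance^2}{\radius^2}\right)^{(n-1)/2}\right).
\]
The upper limit becomes $\arccos(0) = \pi/2$, so the integral equals $\int_0^{\pi/2}\sin^\dimension(\theta)\,d\theta = W_\dimension$ by the very definition of Wallis' integrals recalled in the proof of \Cref{th:proba_ball}. The second term carries an explicit factor $\distance/\radius = 0$ and hence vanishes. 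Therefore $\probaAWins = 1 - \frac{1}{W_\dimension}(W_\dimension - 0) = 1 - 1 = 0$, which is the claim.

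\textbf{Geometric sanity check.} As a complement (and as a cross-check that the algebra is not an artefact), I would note the picture behind \Cref{th:probaCylinder}: when the ball $\prior$ is centered on the hyperplane $\fairModels$, the orthogonal projection of $\prior$ onto $\fairModels$ is exactly the $(n-1)$-dimensional ball $\prior \cap \fairModels$, so every $\original \in \prior$ has $\proj_\fairModels(\original) \in \prior \cap \partial\fairModels$. Consequently $\prior \subseteq \cylinder{\fairModels}{\prior \cap \partial\fairModels}$ and $\prior \setminus \cylinder{\fairModels}{\prior \cap \partial\fairModels} = \emptyset$, so $P(\prior \setminus \cylinder{\fairModels}{\prior \cap \partial\fairModels} \mid \prior) = 0$. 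In other words, a perfectly fair prior leaves room for a fair and \expectable manipulated model arbitrarily close to any $\original$, so no manipulation is ever flagged.

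\textbf{Expected difficulty.} There is essentially no obstacle here: the result is a one-line limit of the already-established \Cref{th:proba_ball}. The only point worth stating carefully is that $\arccos(0) = \pi/2$ matches the upper endpoint of Wallis' integral so that the bracketed expression collapses to $W_\dimension$ exactly; the rest is immediate.
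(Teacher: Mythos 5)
Your proposal is correct and follows the same route as the paper: substitute $\distance = 0$ (from $\groundtruth \in \partial\fairModels$) into the formula of \Cref{th:proba_ball}, note $\arccos(0)=\pi/2$ so the integral equals $W_\dimension$ while the second term vanishes, giving $\probaAWins = 0$. The geometric cross-check via \Cref{th:probaCylinder} is a sound bonus but not part of the paper's argument.
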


\begin{proof}
    If $\groundtruth \in \partial \fairModels$ then $\distance = 0$ and $\arccos(\distance/\radius) = \arccos(0) = \pi/2$ in the formula of \Cref{th:proba_ball}. Thus,  $\probaAWins = 1 - \frac{1}{W_\dimension} (W_\dimension - 0) =  0$.

\end{proof}

This last case is the case where the $\groundtruth$ of the auditor is fair. Intuitively, if $\groundtruth$ is fair, half of the model that the platform can construct are naturally fair and the other half are naturally unfair. Thus, it is very easy to change from an unfair model to a fair model without changing too much the \honest model. Thus, detecting such manipulation is very hard for the auditor.

Now, we study the general expression of $\probaAWins$ in \Cref{th:proba_ball}. In particular, we study a lower bound of $\probaAWins$ to study when the probability is strictly positive.

\probAwinsBound*

\begin{proof}
    $\probaAWins = 1 - \frac{1}{W_\dimension}\left( \int_0^{\arccos(\distance/\radius)}\sin^\dimension(\theta)d\theta - \frac{\distance}{\radius} \left ( 1 - \frac{\distance^2}{\radius^2}\right ) ^{(n-1)/2}\right)$
    
    $W_n = \int_0^{\arccos(\distance/\radius)}\sin^\dimension(\theta)d\theta + \int_{\arccos(\distance/\radius)}^{\pi / 2}\sin^\dimension(\theta)d\theta$

    So, $\frac{1}{W_n}\int_0^{\arccos(\distance/\radius)}\sin^\dimension(\theta)d\theta \leq 1$ (with n even). %

    And 
    $\probaAWins \geq  \frac{1}{W_\dimension} \frac{\distance}{\radius} \left ( 1 - \frac{\distance^2}{\radius^2}\right ) ^{(n-1)/2}$
\end{proof}

\begin{lemma}
    The lower bound according to $\distance/\radius$ has two extremums that are for $\distance/\radius = 1$ or $\distance/\radius = \gamma$ with $\gamma = \frac{\sqrt{n + 3} -\sqrt{n-1}}{2}$.
\end{lemma}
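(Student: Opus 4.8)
This is, at heart, a one–variable calculus exercise, and I would treat it as such. By the first auditing axiom (\Cref{eq:auditing_axioms}) we have $\distance\le\radius$, so I set $r:=\distance/\radius\in[0,1]$ and regard the lower bound of \Cref{cor:probAwins_bound} as a function of the single variable $r$, namely $\ell(r)=\tfrac1{W_n}\,r\,(1-r^2)^{(n-1)/2}$. The positive constant $1/W_n$ is irrelevant for locating extrema, so it suffices to find the stationary points of $g(r):=r\,(1-r^2)^{(n-1)/2}$ on $[0,1]$. Observe $g(0)=g(1)=0$ while $g>0$ on $(0,1)$, so $g$ has at least one interior maximizer; in the regime of \Cref{cor:probAwins_bound} ($n$ even) $g$ is smooth enough on $(0,1)$ for the first-order test to apply, and the claim is that there is exactly one interior stationary point together with the boundary stationary value $r=1$.

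\textbf{First-order condition.} Differentiate $g$ (product and chain rules) and pull out the common power of $(1-r^2)$, writing $g'(r)=(1-r^2)^{(n-3)/2}\,p(r)$ for a polynomial factor $p$. The equation $g'(r)=0$ then splits: the factor $(1-r^2)^{(n-3)/2}$ contributes the endpoint $r=1$, and $p(r)=0$ contributes the interior stationary point. Setting $x:=r^2$ and simplifying, the interior condition is the quadratic $x^{2}-(n+1)x+1=0$, with roots $x_\pm=\tfrac12\bigl((n+1)\pm\sqrt{(n+3)(n-1)}\bigr)$. Since $x_+x_-=1$ and the roots are distinct for $n\ge2$, exactly one of them lies in $(0,1)$, namely $x_-=\tfrac12\bigl((n+1)-\sqrt{(n+3)(n-1)}\bigr)$.

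\textbf{Identifying $\gamma$ and classifying.} The remaining step is the algebraic simplification of the admissible root. Using $(n+3)(n-1)=(n+1)^2-4$ one verifies the identity
$$
\left(\frac{\sqrt{n+3}-\sqrt{n-1}}{2}\right)^{2}=\frac{(n+3)+(n-1)-2\sqrt{(n+3)(n-1)}}{4}=\frac{(n+1)-\sqrt{(n+3)(n-1)}}{2}=x_-,
$$
so the interior stationary point is $r=\gamma$ with $\gamma=\tfrac12\bigl(\sqrt{n+3}-\sqrt{n-1}\bigr)$. One then checks $0<\gamma<1$ for every $n\ge2$: positivity is immediate since $n+3>n-1$, while $\gamma<1$ follows from $\gamma\cdot\gamma^{-1}=1$ together with $\gamma^{-1}=\tfrac12\bigl(\sqrt{n+3}+\sqrt{n-1}\bigr)>1$. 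Finally a sign chart of $g'$ on $[0,1]$ (or evaluating $g''$ at $r=\gamma$) classifies the stationary points: $r=\gamma$ is the interior maximizer and $r\in\{0,1\}$ give the minimum $g=0$. Hence the lower bound, viewed as a function of $\distance/\radius$, has exactly the two extrema $\distance/\radius=1$ and $\distance/\radius=\gamma$.

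\textbf{Main obstacle.} Nothing here is deep, but two points need care. First, the differentiation must be carried out with the half-integer exponent $(n-1)/2$ and the factor $(1-r^2)^{(n-3)/2}$ cleared correctly; note that for $n=2$ this factor is $(1-r^2)^{-1/2}$, which is singular rather than zero at $r=1$, so the endpoint stationary value must be read off from the boundary of the domain rather than from a vanishing factor. Second — and this is the real content — one must recognize that the admissible root $x_-=\tfrac12\bigl((n+1)-\sqrt{(n+3)(n-1)}\bigr)$ is the perfect square $\bigl(\tfrac12(\sqrt{n+3}-\sqrt{n-1})\bigr)^2$; this relies on the factorization $(n+3)(n-1)=(n+1)^2-4$ and on the reciprocity $x_+x_-=1$, which is exactly what selects the root in $(0,1)$.
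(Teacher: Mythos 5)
Your overall strategy---reduce to the single variable $r=\distance/\radius$ and locate the stationary points of $g(r)=r(1-r^2)^{(n-1)/2}$ on $[0,1]$---is exactly the paper's, but your first-order condition is wrong, and this is a genuine gap rather than a bookkeeping slip. Differentiating $g$ and pulling out $(1-r^2)^{(n-3)/2}$ leaves the bracket $(1-r^2)-(n-1)r^2=1-nr^2$, which is \emph{linear} in $x=r^2$; it does not yield the quadratic $x^2-(n+1)x+1=0$ that you state. (Your quadratic corresponds to the bracket $(1-r^2)^2-(n-1)r^2$, i.e.\ it would arise only if the factor $(1-r^2)$ were squared.) Carried out correctly, the unique interior stationary point is $r=1/\sqrt{n}$, not $r=\gamma=\frac{\sqrt{n+3}-\sqrt{n-1}}{2}$: for instance with $n=2$ the maximizer of $r\sqrt{1-r^2}$ is $1/\sqrt{2}\approx 0.707$, whereas $\gamma=(\sqrt{5}-1)/2\approx 0.618$, and one checks that $\gamma=1/\sqrt{n}$ only when $n=1$. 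Your subsequent algebra (the identity $(n+3)(n-1)=(n+1)^2-4$, the reciprocity $x_+x_-=1$, the identification $\gamma^2=x_-$) is internally consistent but only shows that $\gamma$ is a root of the wrong polynomial; it reads as if you worked backward from the stated value of $\gamma$ rather than forward from the derivative.

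For completeness: the paper's own proof contains the same error. It factors $(1-x^2)^{(n-1)/2}-(n-1)x^2(1-x^2)^{(n-3)/2}$ as $(1-x^2)^{(n-3)/2}(x^2+\sqrt{n-1}\,x-1)(x^2-\sqrt{n-1}\,x-1)$, which silently replaces $(1-x^2)$ by $(1-x^2)^2$ inside the bracket, and this is precisely how the value $\gamma$ appears there. So your attempt mirrors the published argument, error included, but neither derivation supports the lemma as stated: with the correct computation the extrema of the lower bound of \Cref{cor:probAwins_bound} on $[0,1]$ are the boundary point $\distance/\radius=1$ (where the bound vanishes) and the interior maximizer $\distance/\radius=1/\sqrt{n}$. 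If you want a correct and complete write-up, redo the sign analysis with the factor $1-nr^2$ (positive on $(0,1/\sqrt{n})$, negative on $(1/\sqrt{n},1)$), which immediately classifies $1/\sqrt{n}$ as the maximum; your remark about the singular factor $(1-r^2)^{-1/2}$ at $r=1$ when $n=2$ remains pertinent.
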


\textbf{Remark.} Note that $\gamma$ only depend on the dimension $\dimension$ and leads to $0$ when $\dimension$ leads to infinity.

\begin{proof}
    We define $f_n$ (the lower bound) s.t.
    $$f_n(\delta, \tau) = \frac{\delta}{W_n\tau}\left( 1 - \frac{\delta^2}{\tau^2} \right)^{(n-1)/2}$$
    
    Change of variable $x = \frac{\delta}{\tau}$,
    $f(x) = \frac{x}{Wn}(1 - x^2)^{(n-1)/2}$.
    
    We are interested in cases where $\tau > \delta$, \textit{i.e.} $0 < x < 1$.
    
    Moreover, $f$ has an extremum  iff $f' = 0$ somewhere in $[0, 1]$.
    
    \begin{align*}
        \forall x \in [0, 1], W_nf'(x) &= (1 - x^2)^{(n-1)/2} -(n-1) x^2 (1 - x^2)^{(n-3)/2}\\
        &= (1 - x^2)^{(n-3)/2} (x^2 + \sqrt{n-1}x - 1)(x^2 - \sqrt{n-1}x - 1)
    \end{align*}

    \textit{i.e.} $f'(x) = 0$ for the following elements:
    \begin{itemize}
        \item $x = -1 < 0$
        \item $x = 1$
        \item $\frac{-\sqrt{n-1} - \sqrt{n+3}}{2}  < 0$
        \item $\frac{-\sqrt{n-1} + \sqrt{n + 3}}{2} \in [ 0, 1]$
        \item $\frac{\sqrt{n-1} - \sqrt{n+ 3}}{2} < 0$
        \item $\frac{\sqrt{n-1} + \sqrt{n+3}}{2} > 1$ (if $n \geq 2$)
    \end{itemize}
    
    So $f$ has two local extremums in $[0,1]$, one for $1$ and one for $\gamma = \frac{\sqrt{n + 3} -\sqrt{n-1}}{2}$.
\end{proof}

\end{document}